\documentclass[accepted]{uai2026} % for initial submission

\usepackage[american]{babel}
\usepackage[disable]{todonotes}
\usepackage{natbib} % has a nice set of citation styles and commands
\usepackage{mathtools} % amsmath with fixes and additions
\usepackage{booktabs} % commands to create good-looking tables
\usepackage{tikz} % nice language for creating drawings and diagrams
\usepackage{amssymb}

\usepackage{xfrac}
\usepackage{tcolorbox}
\usepackage{amsthm}

\theoremstyle{definition}
\newtheorem{thm}{Theorem}[section]

\newtheorem{prop}[thm]{Proposition}

\title{Task Expansion and Cross Refinement for Open-World Conditional Modeling}

\author[1]{\href{mailto:<shbhat@cs.unc.edu>?Subject=Your 2026 paper}{Shreyas Bhat Brahmavar}{}}
\author[1]{Qiyang Liu}
\author[2]{Yang Li}
\author[1]{Junier Oliva}

\affil[1]{%
    Department of Computer Science\\
    University of North Carolina at Chapel Hill
}
\affil[2]{%
    Independent Researcher
}
\begin{document}
\maketitle

\begin{abstract}
Open-world conditional modeling (OCM), requires a single model to answer arbitrary conditional queries across heterogeneous datasets, where observed variables and targets vary and arise from a vast open-ended task universe. Because any finite collection of real-world datasets covers only a small fraction of this space, we propose Task Expansion and Cross Refinement (TEXR), a semi-supervised framework that enlarges effective task coverage through structured synthesis and refinement of semantic data contexts. TEXR first generates diverse uninstantiated dataset schemas and weakly instantiates them via structured probabilistic generators guided by large language models. It then performs cross-model refinement by training on disjoint data partitions and revising synthetic values across splits to reduce confirmation bias and improve pseudo-value quality. The refined synthetic datasets are aggregated with real data to train a unified conditional model. Across heterogeneous tabular benchmarks, TEXR consistently improves zero-, few-, and many-shot performance for multiple OCM backbones, demonstrating that structured task expansion and cross refinement enhance open-world conditional modeling.\looseness-1
\end{abstract}

\section{Introduction}

\begin{figure}[t]
  \centering
  \includegraphics[width=\linewidth]{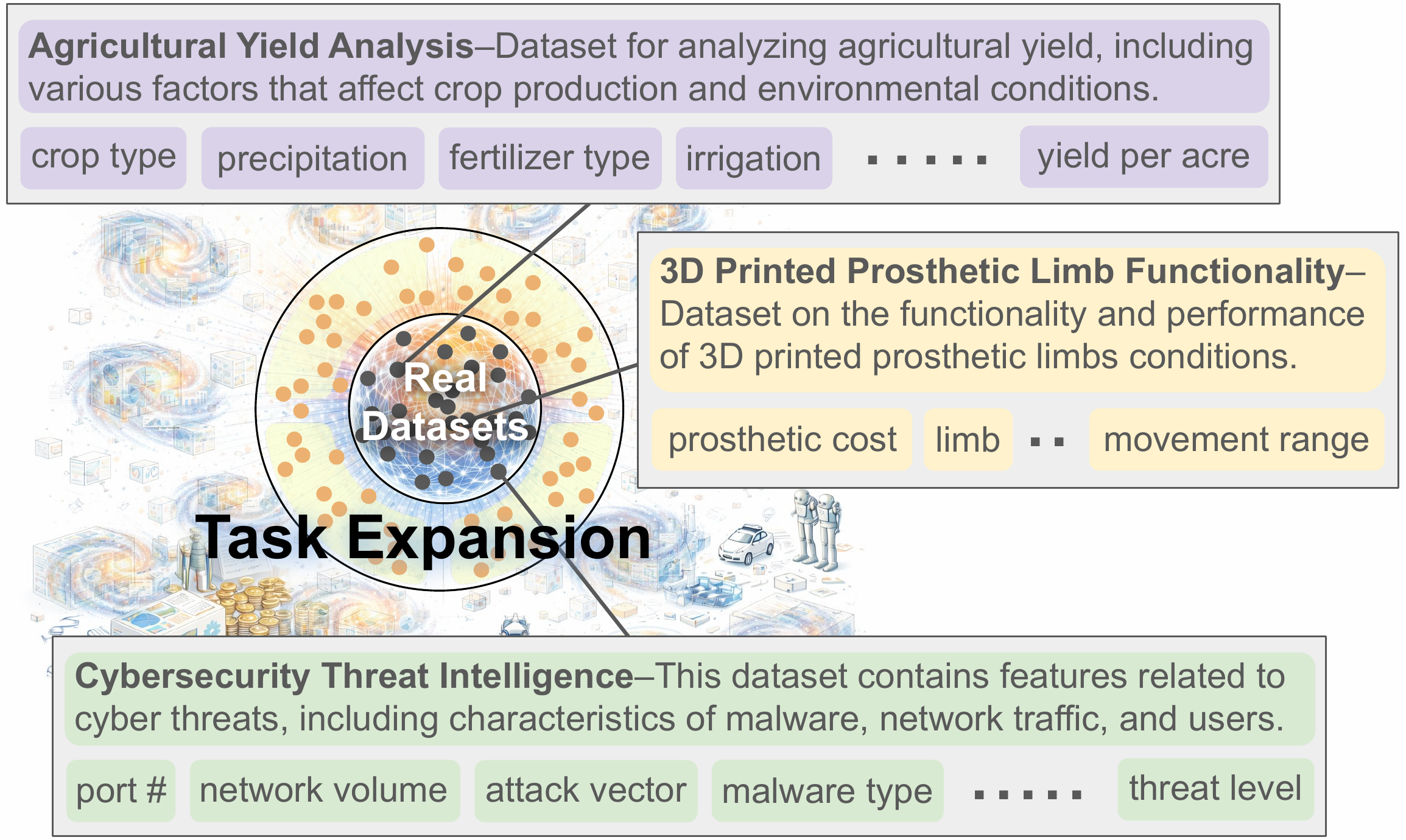}
  \caption{The \textbf{task universe} for open-world conditional modeling is vast, and any finite collection of real-world datasets (gray) captures only a small fraction of it. We propose synthetic data generation and refinement techniques to expand task coverage (orange).}
  \label{fig:universe}
\vspace{-5pt}
\end{figure}

Modern foundation models—most prominently large language models (LLMs)—have demonstrated an emerging ability to perform inference across a strikingly broad range of tasks without task-specific retraining, suggesting a pathway toward general-purpose reasoning systems \citep{brown2020languagemodelsfewshotlearners, wei2022emergentabilitieslargelanguage}. Inspired by this paradigm, a recent subset of tabular foundation models such as TP-BERTa \citep{yan2024makingpretrainedlanguagemodels}, CM2 \citep{ye2024crosstablemaskedpretrainingweb}, GTL \citep{wen2024supervisedgenerativenovelparadigm}, TabSTAR \citep{arazi2025tabstar} and ASPIRE \citep{brahmavar2026universalneurallikelihoodinference} aim to perform \textbf{semantically-grounded open-world conditional modeling} (OCM)\footnote{Some tabular foundational models do not perform semantically grounded OCM and instead provide a finetuneable or in-context adjusted pretrained model for a new \emph{many-shot} dataset.}. Given \emph{little} (few-shot) or \emph{no} (zero-shot) examples, along with an arbitrary set of observed variables and their semantic descriptions, these models estimate a designated target—even when the feature set or prediction task was never explicitly encountered during training. 

This capability is crucial for enabling statistically informed predictions in \textbf{data-starved or hypothetical settings}, where labeled data for the exact task may be scarce or unavailable. For example, a single generalist model might be asked to predict: (i) the risk of septic shock from a sparse subset of labs and vitals in a newly integrated hospital system; (ii) regional hospitalization burdens from partial demographic and policy indicators; or (iii) gene-expression responses from a limited panel of molecular measurements. 

However, this \emph{open-world inference problem} is exceptionally difficult because the support of possible tasks spans the combinatorial space of all feature subsets, target choices, and domain-specific data-generating processes. Although there has been a growing effort to aggregate large collections of real-world datasets for pretraining, even expansive collections (spanning thousands of datasets) represent only a \textbf{small fraction of this universal task support}. Consequently, models are forced to operate largely in out-of-distribution regimes at test time. Since acquiring real-world data across vast encompassing scenarios is prohibitive, this inherent coverage gap remains a fundamental barrier to open-world conditional modeling (see Fig.~\ref{fig:universe}).\looseness-1

To bridge this gap, we introduce the first principled, semi-supervised framework designed specifically to scale task coverage for OCM without requiring additional real-world data. We begin by recasting OCM as a \emph{context–to–feature} generative problem. Specifically, \emph{we show an equivalence} between training a model to map arbitrary features to arbitrary targets, and training a conditional generator that produces coherent feature values conditioned purely on a semantic data context (table-level and column-level descriptions, variable names, and types). This reframing naturally unlocks a semi-supervised formulation. In addition to fully observed datasets (semantic contexts paired with instance-level values), we can incorporate \emph{context-only specifications}—collections of dataset/variable descriptions and metadata without any associated data rows. These "uninstantiated" contexts define the structure of a task without providing realizations, functioning analogously to unlabeled examples in classical semi-supervised learning.\looseness-1

To construct and leverage the ``unlabeled'' data (i.e., ``uninstantiated'' synthetic contexts with no feature values), we introduce \textbf{Task Expansion and Cross Refinement (TEXR)}, a semi-supervised framework designed to scale effective task coverage. TEXR first performs \emph{task expansion} by synthesizing diverse, semantically grounded data contexts (schemas with feature and dataset descriptions) and \emph{weakly instantiates} them using structured probabilistic generators guided by large language models. This produces heterogeneous synthetic datasets that represent plausible but previously unseen conditional tasks. It then applies \emph{cross refinement}, where models are trained on disjoint data partitions and used to revise each other’s synthetic labels across splits. Finally, the refined synthetic datasets are aggregated with real data to train a unified, open-world conditional model.

\textbf{Contributions}\quad Our core contributions are as follows:
\begin{itemize}[leftmargin=*,topsep=-6pt, noitemsep, wide]
\item \textbf{Semi-Supervised Learning for OCM.} We establish a principled mathematical link between arbitrary open-world prediction and context-conditioned generation, unlocking a semi-supervised pathway for tabular modeling.
\item \textbf{Synthetic Task Augmentation.} We develop an approach to scale task coverage by synthesizing diverse, semantically grounded contexts and weakly instantiating them via LLM-guided structured probabilistic generators.
\item \textbf{Cross-Model Refinement.} We propose a procedure where models trained on disjoint data partitions  
revise synthetic pseudo-instances to refine their values while reducing error amplification and mitigating confirmation bias.\looseness-1 %revise synthetic values to  effectively denoising pseudo-labels while mitigating confirmation bias.
\item \textbf{Empirical Experiments.} Across heterogeneous tabular benchmarks, TEXR yields considerable performance gains in zero-, few-, and many-shot regimes for multiple OCM backbones, outperforming existing state-of-the-art synthetic data and pseudo-labeling baselines.
\item \textbf{A Public Corpus of Synthetic Tasks.} Alongside our code, we release a collection of 10{,}000+ high-quality, refined synthetic datasets spanning a broad range of scenarios to accelerate future research in open-world tabular inference.
\end{itemize}

\section{Related Works}\label{sec:related_work}

\subsection{Tabular Foundation Models}

%Recent work has shifted from per-dataset training toward models (pre)trained across large collections of heterogeneous tables to enable cross-dataset generalization.

\textbf{Transfer Learning for Many-shot Inference.}
A central objective in tabular learning is to leverage patterns learned from diverse collections of datasets to improve performance on a new target dataset unseen during pretraining. Unlike conventional per-dataset models (e.g., MLPs or gradient-boosted trees), these approaches aim to transfer statistical structure across tasks.

The TabPFN family \citep{hollmann2023tabpfntransformersolvessmall, ye2025closerlooktabpfnv2, grinsztajn2026tabpfn25advancingstateart}, TabICL \citep{qu2025tabicltabularfoundationmodel}, and LimiX \citep{zhang2025limixunleashingstructureddatamodeling} are strong cross-tabular predictors based on in-context or meta-learning paradigms. These models primarily rely on learned column embeddings and summary-statistic representations derived from feature values, and generally do not incorporate rich natural-language descriptions of datasets or features.

\textbf{Semantically-Grounded Tabular Learning.}
In parallel, another line of work incorporates textual and metadata descriptions to ground tabular prediction semantically over arbitrary feature–target configurations. This is particularly important in zero- or (very) few-shot OCM settings, where summary statistics alone are insufficient (or unavailable in the zero-shot case) to characterize a new dataset.

CM2 \cite{ye2024crosstablemaskedpretrainingweb} and TP-BERTa \cite{yan2024makingpretrainedlanguagemodels} pretrain shared backbones via cross-table objectives and finetune for downstream tasks. CM2 employs masked table modeling that integrates feature semantics, while TP-BERTa adapts a pretrained language model by fusing textual feature descriptions with discretized numerical values. ASPIRE \cite{brahmavar2026universalneurallikelihoodinference} extends cross-table masked modeling with semantic conditioning, training transformer architectures that aggregate heterogeneous features while leveraging dataset- and column-level textual context. TabSTAR \cite{arazi2025tabstar} considers semantically target-aware representations by unfreezing a pretrained text encoder and injecting target tokens as inputs, enabling cross-dataset transfer without dataset-specific output heads. LLM-based approaches such as TabLLM \cite{hegselmann2023tabllmfewshotclassificationtabular}, FeatLLM \cite{han2024largelanguagemodelsautomatically}, and GTL \cite{wen2024supervisedgenerativenovelparadigm} serialize rows into text and cast prediction as conditional language modeling via prompting or supervised fine-tuning. These methods leverage in-context learning and semantic reasoning to generate predictions through next-token likelihood conditioned on feature values and textual context.

\emph{Motivated by low-data OCM, we study how TEXR enhances these semantically grounded cross-tabular models.}

\subsection{Synthetic Dataset Generation for Cross-Tabular Generalization}

\textbf{Synthetic Priors for Cross-Tabular Transfer.}
TabPFN-style models \citep{hollmann2023tabpfntransformersolvessmall, ye2025closerlooktabpfnv2, grinsztajn2026tabpfn25advancingstateart} demonstrate that training on large collections of synthetic datasets can induce strong cross-dataset generalization. These models sample tasks from structured priors over functional classes, feature dependencies, noise, and sample regimes, enabling in-context transfer at test time. However, their synthetic data operate over abstract, black-box feature spaces and do not incorporate dataset-level semantics, column descriptions, or natural-language context.\looseness-1

\textbf{Semantically-Grounded Synthetic Generation.}
Recent work leverages semantic metadata and large language models to generate faithful tabular samples, particularly in low-data settings. Approaches such as StructSynth \citep{liu2025structsynthleveragingllmsstructureaware}, CTSyn \citep{lin2025ctsynfoundationmodelcross}, GReaT \citep{borisov2023languagemodelsrealistictabular}, TabuLa \citep{zhao2025tabulaharnessinglanguagemodels}, and CLLM \cite{seedat2024curatedllmsynergyllms} condition generation on textual descriptions or schema information \emph{along with a few instances} to produce additional samples. 
% These methods are typically used and evaluated in single-dataset settings, where synthetic data is judged on fidelity to that dataset or augments/regularizes a model trained on that dataset, rather than broadening the task distribution encountered during cross-dataset pretraining on various OCM backbones. \todojo{better phrasing?}
% These methods are typically used and evaluated in single-dataset settings, where synthetic data is assessed for within-dataset fidelity or used to augment and regularize a model trained on that dataset, rather than to expand the task distribution underlying cross-dataset pretraining of OCM models.
% These methods are typically used and evaluated w.r.t.~single-test-time-dataset settings, where synthetic data is assessed for within-dataset fidelity to an unseen dataset or used to augment and regularize a model trained on that dataset, rather than to expand the task distribution underlying cross-dataset pretraining of OCM models.
These methods are typically used and evaluated w.r.t.~a single target dataset at test time, where synthetic data is assessed for fidelity to that dataset or used to augment and regularize its downstream model, rather than to expand the task distribution underlying cross-dataset 
%OCM pretraining.
pretraining for OCM backbones

% \textbf{TEXR: Expanding the Semantic Task Universe.}
% \emph{TEXR instead uses synthetic data to (starting from a zero-shot dataset free approach) enlarge effective task coverage for semantically-grounded cross-tabular models that enables semi-supervised regime that improves OCM.} 
\emph{TEXR instead uses synthetic data—starting from a zero-shot, dataset-free formulation—to expand effective task coverage for semantically grounded cross-tabular models, enabling a semi-supervised regime that improves OCM.}

\subsection{Pseudo-Labeling For Tabular Foundational Models}

% Pseudo-labeling and self-training originate in semi-supervised learning, beginning with the general pseudo-labeling framework of \cite{lee2013}, which treats high-confidence predictions on unlabeled data as training targets, and subsequent refinements such as confidence-based pseudo-labeling (CPL) \cite{TODO}, which filter or weight pseudo-labels to mitigate confirmation bias. In tabular learning, these methods are primarily applied within single-dataset settings, focusing on improving pseudo-label reliability rather than enhancing open-world cross-dataset generalization. For example, CAST \cite{kim2023cast} calibrates pseudo-label confidence using density- and cluster-assumption signals to reduce noise during self-training on a specific dataset \todojo{Is this true?}. Closer to cross-dataset pseudo-labeling, MediTab \cite{ijcai2024p670} augments across medical tables by generating pseudo-labels to impute unobserved patient characteristics, leveraging the fact that patients share a coherent universe of attributes across datasets; its learn--annotate--audit pipeline refines supplementary data before retraining. However, this strategy relies on domain-specific feature universality (e.g., patient attributes) and does not directly address open-world settings in which feature universes differ across datasets.

Pseudo-labeling is a standard semi-supervised strategy for neural networks, originating with \citep{lee2013pseudo}, which treats high-confidence predictions on unlabeled data as training targets.
%, with later refinements such as confidence-based pseudo-labeling (CPL) that filter or weight pseudo-labels to reduce confirmation bias. 
In tabular learning, these approaches are primarily applied within single-dataset settings to improve pseudo-label reliability rather than cross-dataset open-world generalization \citep{yoon2020vime, farahani2022clpl, kim2023revisiting}. For example, CAST \citep{kim2023cast} calibrates pseudo-label confidence using density- and cluster-based signals during self-training on a given table. STUNT \citep{nam2023stunt} augments a dataset by appending cluster-derived pseudo-labels to enhance representation learning, rather than expanding the set of datasets into new task scenarios. MediTab \citep{wang2024meditab} leverages cross-dataset medical patterns to generate synthetic patients by imputing shared patient characteristics into existing tables. However, MediTab relies on the shared applicability of patient attributes across datasets and does not extend beyond the clinical attribute space into new semantic task scenarios.

% \emph{In contrast, TEXR performs instance-level value refinement across heterogeneous semantic contexts, using synthetic data to expand effective task coverage and improve zero-/few-shot OCM }
% Unlike prior approaches that refine pseudo-labels within a fixed dataset or domain, 

\emph{In contrast, TEXR performs instance-level value refinement to instantiate new structured tasks from semantic contexts, expanding the task universe for zero-/few-shot OCM.}

\section{Method}\label{sec:method}
We derive a semi-supervised framework that treats semantic data contexts as the fundamental unit of unsupervised learning for open-world conditional modeling. To address the coverage gap of real-world datasets (Fig.~\ref{fig:universe}), our framework—\textbf{Task Expansion and Cross Refinement (TEXR)}—constructs a diverse pool of uninstantiated contexts, generates weak but structured synthetic tabular values, and iteratively refines them through a cross-model co-training procedure. This pipeline expands the model's coverage over the open-world task distribution while maintaining alignment with real-world data priors.

\begin{figure}[t]
  \centering
  \includegraphics[width=.99\linewidth]{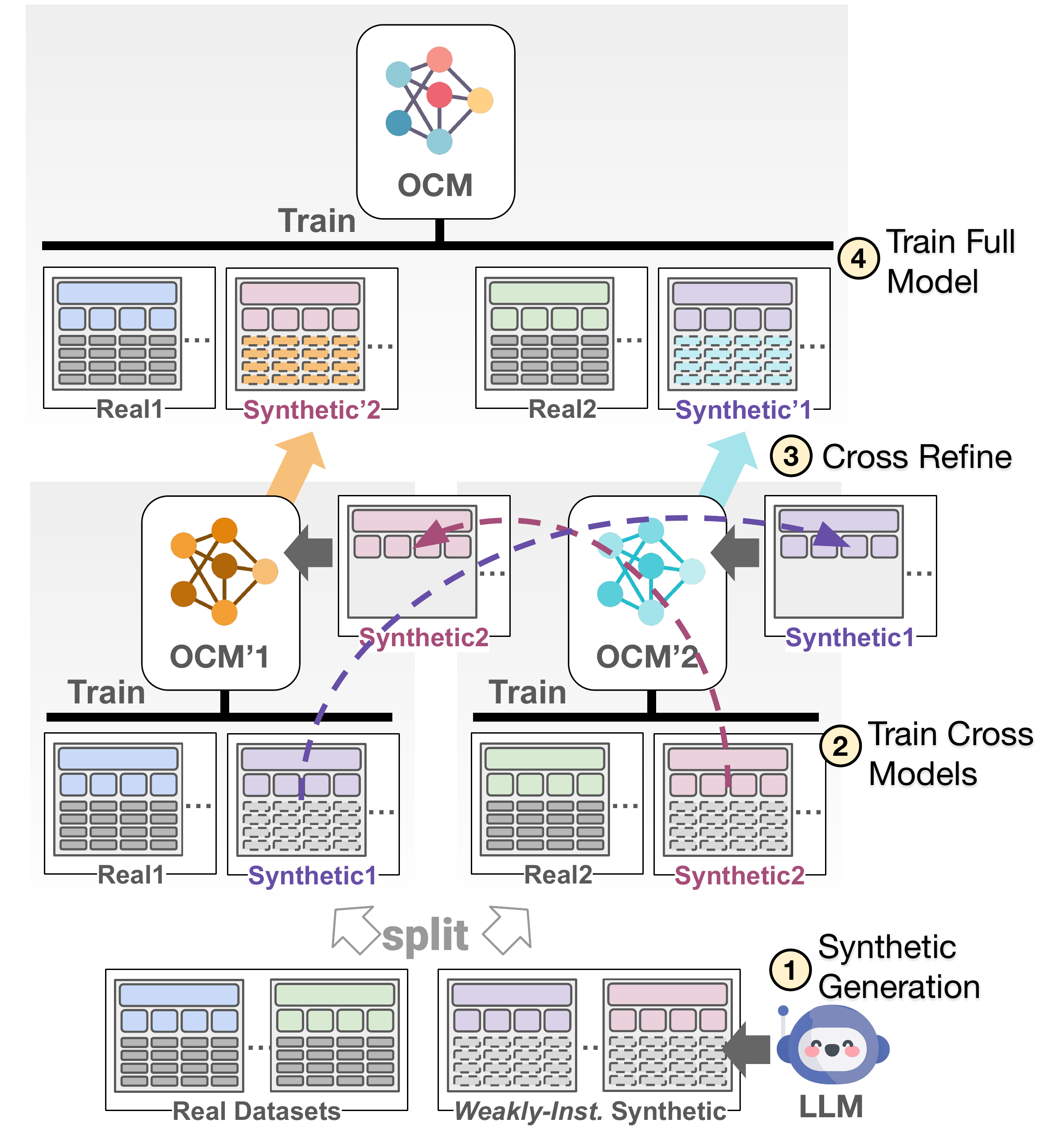}
  \caption{\textbf{TEXR} provides an initial collection of weakly-instantiated datasets that are partitioned and then refined with cross model labeling (which refine values on synthetic data not seen during training). Finally, revised synthetic and real datasets are consolidated to train a full open-world conditioning model.\label{fig:texr}}
  
\end{figure}

\subsection{OCM as Context Conditioned Joint Modeling}
As aforementioned, recent efforts have resulted in various models capable of performing open-world conditional modeling. Formally, these models aim to learn a parameterized conditional estimate of $p_\theta(x_t \mid x_o, \mathbf{S})$; at inference time, given a semantic data context $\mathbf{S}$ (e.g., dataset metadata and natural language feature descriptions) and an arbitrary set of observed feature values $x_o$\footnote{For notational simplicity, we formulate the zero-shot case here; the few-shot case is analogous but additionally incorporates a support set of examples.}, the objective is to estimate a designated target variable $x_t$.

To achieve arbitrary conditioning and arbitrary target prediction, open-world conditional models are typically trained based on minimizing the negative log conditional likelihood over a vast task distribution:
\begin{equation}\label{eq:ocm_loss}
-\mathbb{E}_{\mathbf{S}\sim\mathcal{T}}\mathbb{E}_{x\sim\mathcal{P}_{\mathbf{S}}}\mathbb{E}_{o,t\sim\mathcal{M}_{\mathbf{S}}}[\log p_{\theta}(x_{t} \mid x_{o},\mathbf{S})]
\end{equation}
where $\mathcal{T}$ represents the universal task distribution of semantic contexts, $\mathcal{M}_{\mathbf{S}}$ is a masking distribution that randomly selects observed features $o$ and target $t$, and $\mathcal{P}_{\mathbf{S}}$ is the true data distribution from which actual instances $x$ are sampled.

However, this native formulation is not directly conducive for synthetic data generation and semi-supervised learning since the generation of "unlabeled" examples is \emph{no easier} than the generation of "labeled" examples. That is, the generation of an unlabeled example--a coherent collection of feature descriptions, feature values, and dataset description--is itself a "labeled" example, since we can transfer over any one of the features to act as a target. Below, we mathematically \emph{decouple} the task semantic data descriptions from the observed data values for a formulation that more readily leads to a semi-supervised framework.

% However, this native formulation exposes a critical limitation: training relies entirely on fully observed data, since the masking expectation $\mathbb{E}_{o,t\sim\mathcal{M}_{\mathbf{S}}}$ must accommodate arbitrary feature and target subsets, the data expectation $\mathbb{E}_{x\sim\mathcal{P}_{\mathbf{S}}}$ inherently requires sampling from complete, fully populated data. Under this paradigm, scaling the model's open-world capabilities is fundamentally bottlenecked by the immense cost and practical difficulty of collecting coherent instance data across a vast diversity of semantic contexts.

% To bypass this requirement and mathematically decouple the task structure from the observed data, we propose an alternative formulation.

\begin{prop} \label{thm:OCM}
Open-world conditional model training on arbitrary inputs and targets is equivalent to training based on context-conditioned joint modeling. 
\end{prop}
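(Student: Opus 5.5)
The plan is to make the informal statement precise and then prove it as two implications. We define a \emph{context-conditioned joint model} $q_\theta(x \mid \mathbf{S})$ over all features of context $\mathbf{S}$, trained with $-\mathbb{E}_{\mathbf{S}\sim\mathcal{T}}\mathbb{E}_{x\sim\mathcal{P}_{\mathbf{S}}}[\log q_\theta(x\mid \mathbf{S})]$. We then show two things. First, the minimizers of \eqref{eq:ocm_loss} are exactly the conditionals of the true $\mathcal{P}_{\mathbf{S}}$ (on the support of $\mathcal{M}_{\mathbf{S}}$). Second, these are exactly the conditionals induced by the minimizer of the joint objective. We also exhibit a direct construction in each direction, so that training one is (in the population limit) training the other.

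\textbf{Step 1 (OCM loss targets true conditionals).} Decompose \eqref{eq:ocm_loss} by conditioning on $(\mathbf{S},o,t,x_o)$. The inner term is $\mathbb{E}_{x_t\sim \mathcal{P}_{\mathbf{S}}(x_t\mid x_o)}[-\log p_\theta(x_t\mid x_o,\mathbf{S})]$. This equals the conditional entropy plus $\mathrm{KL}(\mathcal{P}_{\mathbf{S}}(x_t\mid x_o)\,\|\,p_\theta(\cdot\mid x_o,\mathbf{S}))$. Hence the objective is minimized iff $p_\theta(x_t\mid x_o,\mathbf{S})=\mathcal{P}_{\mathbf{S}}(x_t\mid x_o)$ almost everywhere, for every $(o,t)$ in the support of $\mathcal{M}_{\mathbf{S}}$ and every $\mathbf{S}$ in the support of $\mathcal{T}$. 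The same Gibbs-inequality argument shows that the joint objective is minimized iff $q_\theta(\cdot\mid\mathbf{S})=\mathcal{P}_{\mathbf{S}}$.

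\textbf{Step 2 (joint $\Rightarrow$ arbitrary conditionals).} Given $q_\theta(x\mid\mathbf{S})$, define $p(x_t\mid x_o,\mathbf{S}) = q_\theta(x_t,x_o\mid\mathbf{S})/q_\theta(x_o\mid\mathbf{S})$ by marginalization. At the optimum this reproduces the true conditionals for \emph{every} $(o,t)$, so it minimizes \eqref{eq:ocm_loss} for any masking distribution.

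\textbf{Step 3 (arbitrary conditionals $\Rightarrow$ joint).} Given an optimal OCM model, assume $\mathcal{M}_{\mathbf{S}}$ gives positive mass to the patterns $(o,t)=(\{1,\dots,j-1\},j)$, and also $o=\emptyset$. Build $q(x\mid\mathbf{S})=\prod_j p_\theta(x_j\mid x_{<j},\mathbf{S})$ via the chain rule. This recovers $\mathcal{P}_{\mathbf{S}}$. More generally, averaging the chain-rule log-likelihood over random orderings turns the joint NLL into an instance of \eqref{eq:ocm_loss} with an order-induced masking distribution, in the style of any-order autoregressive models. So the objectives agree up to the choice of $\mathcal{M}_{\mathbf{S}}$. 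The key consequence to emphasize is that $\mathbf{S}$ alone indexes the task, so one may learn $q(x\mid\mathbf{S})$ from contexts and plug in (pseudo-)instances.

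\textbf{Main obstacle.} The hard step is Step 3's consistency issue. Separately learned conditionals from a finite-capacity OCM model need not be compatible with a single joint distribution. The equivalence therefore only holds at the population optimum, under a support (positivity) condition on $\mathcal{M}_{\mathbf{S}}$ and $\mathcal{P}_{\mathbf{S}}$ needed for the conditionals to be well defined. I would state these assumptions explicitly and phrase the proposition as equality of optimal solution sets rather than of the losses pointwise in $\theta$.
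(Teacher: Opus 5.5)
Your proof is correct under the assumptions you make explicit, but it proves the proposition in a different sense than the paper does. The paper's sketch equates the objectives themselves, not just their optima. It takes $\mathcal{M}_{\mathbf{S}}$ to be the permutation-induced masking (uniform $\pi$, uniform index $i$, $o=\pi_{<i}$, $t=\pi_i$), under which \eqref{eq:ocm_loss} coincides with \eqref{eq:uci_semantic_loss} for every $\theta$. It then treats the chain rule as giving the same joint for every $\pi$, which identifies this with the normalized joint NLL \eqref{eq:uci_joint_loss}. So OCM training is a stochastic subsampling of normalized joint training, not merely a problem with the same minimizer. That is essentially your Step~3 ``more generally'' remark promoted to the whole argument. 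Note one correction there: with a uniformly drawn index, the OCM loss matches the $1/d_{\mathbf{S}}$-normalized joint NLL, not the unnormalized one you wrote. This matters once contexts of different width share parameters.

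Your main route uses Gibbs' inequality for each objective, marginalization for joint $\Rightarrow$ conditionals, and a single fixed-order chain rule for conditionals $\Rightarrow$ joint. It buys three things:
\begin{itemize}
\item It works for any masking distribution with adequate support. For the paper's identity to be exact, its phrase ``uniformly produces targets and observed feature subsets'' has to be read as the permutation-induced scheme (uniform over cardinalities, then uniform over subsets of that size), not as uniform over all subsets.
\item It supplies the joint $\Rightarrow$ conditionals direction, which the paper leaves implicit.
\item It surfaces the compatibility assumption the paper uses silently in passing from \eqref{eq:uci_semantic_loss} to \eqref{eq:uci_joint_loss}.
\end{itemize}
The price is that it only describes population minimizers under realizability. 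It says nothing about a finite-capacity or misspecified model. By contrast, for parameterizations whose conditionals are consistent, the paper's identity equates the two objectives for every $\theta$, so their minimizers and expected gradients coincide even under misspecification.
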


\emph{Proof Sketch}.\quad
We note that arbitrary conditioning likelihoods \citep{strauss2021arbitrary} $\log p_\theta(x_t \mid x_o, \mathbf{S})$ are directly capable of modeling a joint likelihood via the chain rule of probability under an arbitrary feature permutation $\pi$: $\log p_\theta(x \mid \mathbf{S}) = \sum_{i=1}^{d_\mathbf{S}} \log p_\theta(x_{\pi_i} \mid x_{\pi_{<i}}, \mathbf{S})$. 

% Crucially, if the masking distribution $\mathcal{M}_\mathbf{S}$ in Eq.~\eqref{eq:ocm_loss} is defined by uniformly sampling a permutation $\pi$ and an index $i \in \{1, \dots, d_{\mathbf{S}}\}$ such that the observed features are $o = \pi_{<i}$ and the target is $t = \pi_i$, then the stochastic training objective \eqref{eq:ocm_loss} is mathematically equivalent to minimizing the normalized negative log joint likelihood:
When the masking distribution $\mathcal{M}_\mathbf{S}$ in Eq.~\eqref{eq:ocm_loss} uniformly produces targets and observed feature subsets (of arbitrary cardinality, as is common \cite{ye2024crosstablemaskedpretrainingweb, brahmavar2026universalneurallikelihoodinference}), then the stochastic training objective \eqref{eq:ocm_loss} is mathematically equivalent to minimizing the normalized negative log joint likelihood:
\begin{align}
    &-\mathbb{E}_{\mathbf{S}\sim\mathcal{T}}\mathbb{E}_{x \sim \mathcal{P}_\mathbf{S}} \left[ \frac{1}{d_\mathbf{S}} \log p_\theta(x \mid \mathbf{S}) \right] = \label{eq:uci_joint_loss}\\
    &-\mathbb{E}_{\mathbf{S}\sim\mathcal{T}}\mathbb{E}_{x \sim \mathcal{P}_\mathbf{S}} \mathbb{E}_\pi \left[ \frac{1}{d_\mathbf{S}} \sum_{i=1}^{d_\mathbf{S}} \log p_\theta(x_{\pi_i} \mid x_{\pi_{<i}}, \mathbf{S}) \right]. \label{eq:uci_semantic_loss}
\end{align}
That is, the standard OCM loss \eqref{eq:ocm_loss} stochastically subsamples the conditionals in the joint factorization \eqref{eq:uci_semantic_loss}. \qed

This generative reframing \eqref{eq:uci_joint_loss} clearly delineates the inputs to OC models as the \emph{semantic data contexts}, $\mathbf{S}$, and the outputs as the corresponding instance values, $x$. Crucially, this equivalence naturally unlocks a semi-supervised learning paradigm. By establishing OCM as a context-conditioned joint generator $p_\theta(x \mid \mathbf{S})$, we can formally treat \emph{uninstantiated semantic contexts} (dataset schemas/meta-data with no data rows) as our ``unlabeled'' instances. 

% Unlike traditional unlabeled data, which must be painstakingly collected from the real world, these semantic contexts can be synthetically generated at scale to cover an arbitrarily diverse universe of tasks. Synthesizing full data rows for these empty contexts then equates to generating structured pseudo-labels for the joint distribution. Furthermore, because the OCM itself learns this joint density, it is theoretically sound to use the trained model to iteratively evaluate and refine the statistical coherence of these synthetic values. This joint generative perspective directly motivates our TEXR pipeline, the details of which we introduce below.

\textbf{Semi-Supervised OCM}\quad
%Our framework \eqref{eq:uci_joint_loss}, establishes the OCM task as conditional modeling problem to generate values, $x$, given a corresponding semantic-grounding context $\mathbf{S}$. This 
Our framework \eqref{eq:uci_joint_loss} implies that we may view OCM as training on a \emph{supervised} collection of instantiated datasets: $\mathfrak{D}=\{(\mathbf{S}^{(k)}, \mathbf{X}^{(k)})\}_{k=1}^N$, where $\mathbf{X}^{(k)}$ is a dataset of values for $n^{(k)}$ instances, corresponding to $\mathbf{S}^{(k)}$. Therefore, an \emph{unsupervised} collection would then correspond to a set of  $\mathfrak{U}= \{ \tilde{\mathbf{S}}^{(k)}\}_{k=1}^M$ of semantic contexts without corresponding dataset values (i.e., uninstantiated semantic contexts). While this opens an avenue for semi-supervised learning, it begs the question of how one may collect an uninstantiated collection $\mathfrak{U}$. Unlike in classic settings like computer-vision, where unlabeled instances abound, the notion of an uninstantiated dataset is much rarer.
Here, we propose to generate a diverse collection of hypothetical data scenarios and respective contexts.
% Perhaps one may mine for proposed studies for data collection that describe data contexts and features to be recorded \cite{TODO}. 
% However, such examples are still unlikely to provide much coverage of the vast possible task universe. Instead, here we propose an alternative approach: generate a diverse collection of hypothetical data scenarios and respective contexts.

\subsection{Task Expansion via Synthetic Context}\label{sec:method:syn}

While uninstantiated semantic contexts provide a principled source of unsupervised data for OCM, acquiring them at the scale required to meaningfully expand task coverage is impractical from real-world sources alone. We therefore introduce \emph{Task Expansion}: a systematic synthesis of diverse dataset contexts using large language models (LLMs), leveraging their broad semantic priors.
We construct the unsupervised context pool through a hierarchical two-stage procedure (see Supp.~Mat.~$\S$\,\ref{sec:additional_results}):

\begin{enumerate}[leftmargin=*,topsep=0pt,itemsep=0pt]
    \item \emph{Topic Universe Construction.}  
    To promote broad domain coverage, we prompt an LLM to first enumerate high-level domains (e.g., \emph{Technology \& Digital}, \emph{Health \& Sciences}, \emph{Finance \& Business}), decompose them into sub-fields, and finally generate concrete predictive scenarios (e.g., \emph{hospital readmission risk}, \emph{customer churn}, \emph{loan default prediction}). This yields a large topic set $\mathfrak{T}=\{\tau_k\}_{k=1}^{K}$ spanning thousands of hypothetical tasks.

    \item \emph{Schema Generation.}  
    For each topic $\tau \in \mathfrak{T}$, we prompt the LLM to produce a structured (e.g., JSON) semantic context, consisting of: (i) a natural-language \emph{dataset description} summarizing the overarching task and data, and (ii) a feature list in which each variable is paired with a data type and a detailed semantic description.
\end{enumerate}

This procedure yields a large collection of uninstantiated semantic contexts,
$\mathfrak{U}=\{\tilde{\mathbf{S}}^{(k)}\}_{k=1}^{M}$, each containing rich dataset-/column-level descriptions but no associated rows. Thus, $\mathfrak{U}$ constitutes a purely unsupervised pool of candidate tasks derived from LLM priors.
In the following section, we develop a pseudo-labeling, cross-model refinement procedure that leverages $\mathfrak{U}$ for semi-supervised OCM training.

\begin{figure}[t]
  \centering
  \includegraphics[width=.95\linewidth]{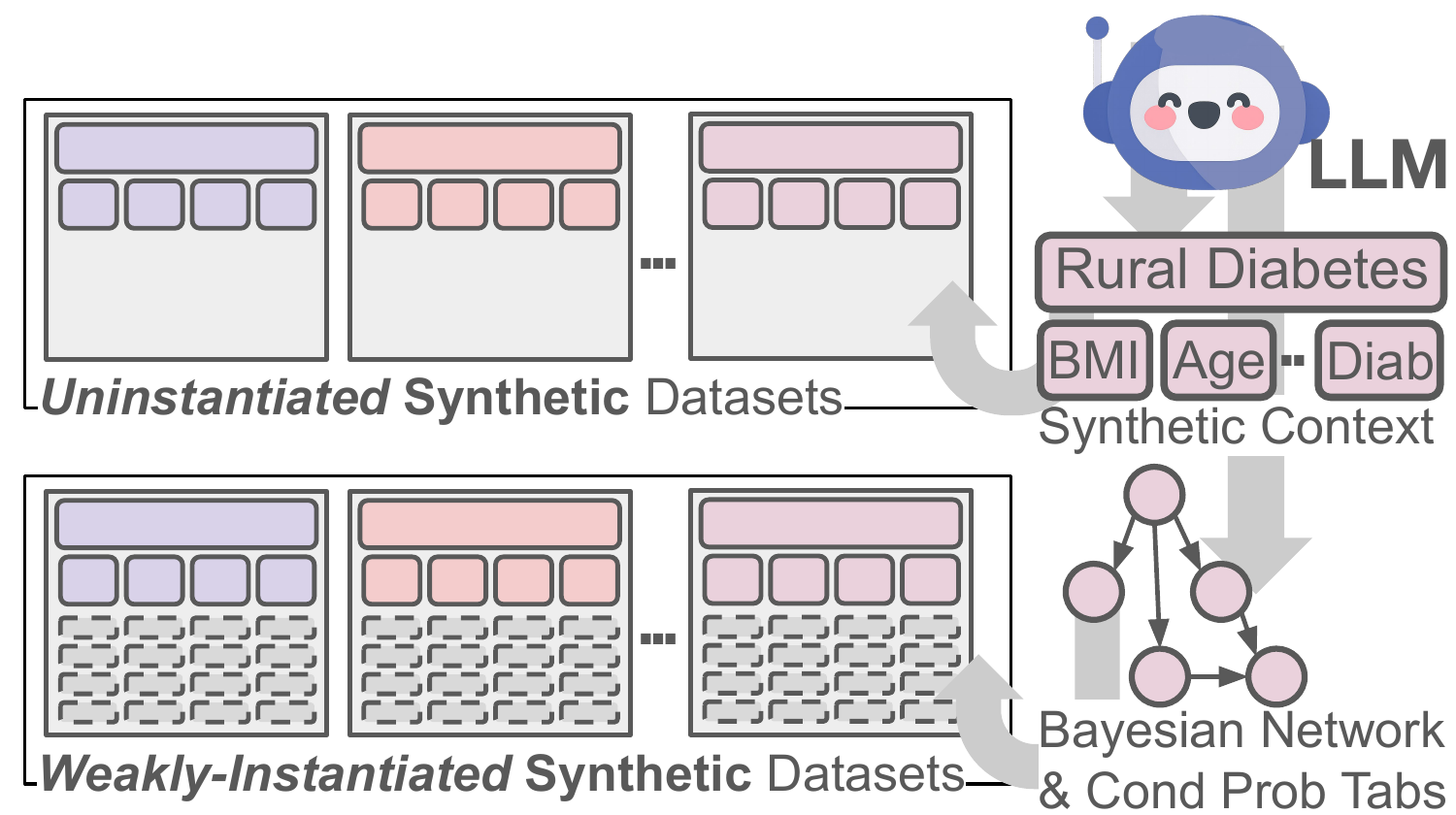}
  \caption{Our initial collection of \textbf{synthetic datasets} stems from an LLM guided process that first generates hypothetical data contexts followed by a weak-labeling scheme that generates corresponding Bayesian networks and conditional probability tables to produce instance values. }
\end{figure}

\subsection{Pseudo-Value Generation and Cross Refinement}

For OCM training that leverages the unsupervised pool of synthetic contexts, $\mathfrak{U}$, we introduce a semi-supervised procedure that sequentially couples weak data instantiation with a partition-induced cross-refinement strategy. This phase fills (instantiates) our empty schemas with structured initial values and systematically revises them.

\subsubsection{Weak Instantiation and Filtration} \label{sec:method:weak}

We begin by assigning preliminary values to the uninstantiated semantic contexts $\mathfrak{U}$. Since generating realistic open-world values is the ultimate goal of the OCM itself, we instead synthesize weak but structured statistical signals to bootstrap training. To ensure these synthetic signals provide complementary information, they are generated independently of any observed instances in the real datasets $\mathfrak{D}$.

For each semantic context, we formulate a plausible probabilistic data-generating process by prompting an LLM to: (i) construct a Bayesian network, a directed acyclic graph (DAG), encoding conditional dependencies between the features, and (ii) a set of corresponding per-node conditional distributions in the form of conditional probability tables. 
The LLM is prompted once to generate the DAG via a topologically ordered list of features and corresponding edge list, which is 
post-processed to enforce acyclicity.
The conditional probability tables perform a look-up based on parents' values (for discrete parents) or quartiles (for continuous parents).
For discrete nodes, the conditional probability tables directly specifies a pmf over domains. 
For continuous nodes, the conditional probability tables specifies a pdf as Gaussian mean and variance.
% Conditional probabilities for discrete variables are parameterized with generated conditional probability tables, while continuous variables utilize simple parametric distributions sampled from quartiles over the declared ranges for each feature\todojo{continuous description is not clear}. 
Sampling from these graphical models yields our initial pool of weakly instantiated synthetic datasets, $\mathfrak{W}=\{(\tilde{\mathbf{S}}^{(k)}, \tilde{\mathbf{X}}^{(k)})\}_{k=1}^M$. (Please find further details in Supp.~Mat.~$\S$\,\ref{sec:llm})

Since these graphical structures and conditional probabilities are purely hypothetical, we propose to filter the resulting values for both plausibility and informativeness. We train an ensemble of two preliminary OC models on disjoint partitions of the real datasets $\mathfrak{D}=\mathfrak{D}_1\cup\mathfrak{D}_2$, $\mathfrak{D}_1\cap\mathfrak{D}_2=\emptyset$. For each synthetic value $\tilde{x}^{(k,i)}_j$ (the $j^{\mathrm{th}}$ feature of the $i^{\mathrm{th}}$ instance in the $k^{\mathrm{th}}$ dataset), we compute the predictive mean $\bar{x}^{(k,i)}_j$ and standard deviation $\sigma^{(k,i)}_j$ from the two real-data-only OC models, conditioned on the remaining features $\tilde{x}^{(k,i)}_{-j}$. We then calculate the absolute $z$-score:
% $$z^{(k,i)}_j = \left| \frac{\tilde{x}^{(k,i)}_j - \bar{x}^{(k,i)}_j}{\sigma^{(k,i)}_j} \right|$$
\begin{equation}
    z^{(k,i)}_j = \left| \tilde{x}^{(k,i)}_j - \bar{x}^{(k,i)}_j\right| / \sigma^{(k,i)}_j .
    \label{eq:zscore}
\end{equation}
This metric elegantly captures two desiderata: a small numerator indicates the value is plausible (it aligns with the real-data models' expectations), while a large denominator indicates the value is informative (it lies in a region of high predictive uncertainty). Synthetic datasets exhibiting low average feature-level $z$-scores are retained.
%\todo{motivate why train two models on half data here}

\subsubsection{Cross-Model Pseudo-Value Refinement}\label{sec:method:cross}

Although filtered, the weakly generated synthetic values are not data-grounded and could be refined to produce better training signals. To strengthen pseudo-instance values while mitigating confirmation bias, we employ a novel cross-model refinement procedure.\looseness-1 % inspired by co-training \cite{TODO}.

We partition the available data into two disjoint splits and train two independent OCMs: $p'_1$ is trained on $\mathfrak{D}_1 \cup \mathfrak{W}_1$, and $p'_2$ is trained on $\mathfrak{D}_2 \cup \mathfrak{W}_2$. We then perform cross-dataset refinement: $p'_1$ recalculates and refines the synthetic values in $\mathfrak{W}_2$, while $p'_2$ revises those in $\mathfrak{W}_1$, yielding the finalized, high-quality synthetic collections $\mathfrak{W}'_1$ and $\mathfrak{W}'_2$ (see \emph{Step 3}, Fig.~\ref{fig:texr}). That is, refined $\mathfrak{W}'_1$ contains datasets ${\tilde{\mathbf{X}}_1^{' (k)}\sim p'_2(\cdot 
\mid \tilde{\mathbf{S}}_1^{(k)})}$, and vice-versa, $\mathfrak{W}'_2$ has datasets ${\tilde{\mathbf{X}}_2^{' (k)}\sim p'_1(\cdot 
\mid \tilde{\mathbf{S}}_2^{(k)})}$  using the autoregressive, context conditioned OCM generation \eqref{eq:uci_semantic_loss}.  
%\todo{needs more details here, maybe add a algorithm block if not easy to describe in text}

Since partitioning induces distinct exposure to dataset patterns and semantic scenarios, the two models, $p'_1$ and $p'_2$, develop diverse predictive biases. By ensuring each model only revises the pseudo-values of synthetic datasets it did \emph{not} observe during training, this cross-revision mechanism leverages that diversity to stabilize the estimates and prevent models from reinforcing their own local generative errors.

\subsection{Consolidated Model Training}

% In the final phase of TEXR, the refined synthetic collections $\mathfrak{W}'_1$ and $\mathfrak{W}'_2$ are aggregated with the original real-world datasets $\mathfrak{D}$ (\emph{Step 4}, Fig.~\ref{fig:texr}). A single, unified open-world conditional model, $p_{\mathrm{full}}$, is then trained over this consolidated corpus $\mathfrak{D} \cup \mathfrak{W}'_1 \cup \mathfrak{W}'_2$. By optimizing the arbitrary conditional masking objective (Eq.~\ref{eq:ocm_loss}) over this vastly expanded task universe, the resulting model effectively bridges the open-world coverage gap, enabling robust zero-shot and few-shot inference on entirely novel semantic contexts.
In the final stage of TEXR, the refined synthetic collections, $\mathfrak{W}'_1$ and $\mathfrak{W}'2$, are joined with the original real-world datasets, $\mathfrak{D}$, to train a unified open-world conditional model $p{\mathrm{full}}$ on the consolidated corpus $\mathfrak{D} \cup \mathfrak{W}'_1 \cup \mathfrak{W}'_2$ (\emph{Step 4}, Fig.~\ref{fig:texr}). By optimizing the arbitrary conditional masking objective (Eq.~\ref{eq:ocm_loss}) over this expanded task universe, the model mitigates the open-world coverage gap, yielding stronger generalization on zero- and few-shot inference over previously unseen semantic contexts (see below).

\section{Experiments}\label{sec:experiment}

We evaluate OCM cross-tabular models under three deployment regimes: 
\textbf{(I) zero-shot}, \textbf{(II) few-shot}, and \textbf{(III) many-shot inference}. 
Evaluation follows \citet{ye2024crosstablemaskedpretrainingweb} and includes 20 heterogeneous tabular datasets spanning binary classification, multiclass classification, and regression, drawn from UCI, OpenML, and Kaggle.

For synthetic pretraining, we sample 10{,}000 datasets from our generative pipeline, with all generators constrained to the same generative budget. The real-dataset collection, $\mathfrak{D}$, is the 1{,}400 OpenTabs datasets \citep{ye2024crosstablemaskedpretrainingweb}, each paired with natural language dataset- and feature-level descriptions.

\textbf{OC Model Backbones.}\quad
We assess how data expansion affects various open-world conditional modeling cross-tabular backbones: \textsc{ASPIRE} \citep{brahmavar2026universalneurallikelihoodinference}, \textsc{CM2} \citep{ye2024crosstablemaskedpretrainingweb}, \textsc{TabSTAR} \citep{arazi2025tabstar}, and \textsc{TP-BERTa} \citep{yan2024makingpretrainedlanguagemodels}. 
% Unless otherwise stated, all training strategies share the same backbone to isolate the effect of the proposed method rather than model family.

% \todojo{There's a ton a spots were you say `an LLM', which? and why? Check that this is true. feel free to directly just mention models in main text if easy enough.}
\textbf{LLM Backbones.}\quad
Please see Supp.~Mat.~$\S$\,\ref{sec:llm}  for full details of all LLMs used. TEXR relies exclusively on open-weight LLMs. For competing baselines, we follow the originally recommended backbones, additionally evaluating updated variants when available and also used within TEXR.

\subsection{Experimental Baselines}

As no prior work evaluates our full problem formulation, we construct strong baselines by composing state-of-the-art components across the pipeline. To our knowledge, these combinations have not been systematically studied to improve OCM. We factor baselines along two axes: \emph{synthetic task generation} and \emph{pseudo-value refinement}.

\paragraph{Training Protocol}
OC models incorporate synthetic data in a pre-training step before continuing their OCM training on OpenTabs real datasets (see Supp. Mat $\S$\ref{sec:additional_results}).
For each CM backbone, each baseline creates a pipeline by selecting (i) a synthetic task generator, (ii) a pseudo-value refinement strategy.  First, the generator produces a pool of weakly instantiated synthetic datasets (analogously to $\S$\,\ref{sec:method:syn}). Next, the refinement strategy updates these values using the OCM backbone trained on the weakly instantiated synthetic data (analogously to $\S$\,\ref{sec:method:cross}). Finally, the full OC model is trained on the union of refined synthetic and real data and evaluated on held-out test datasets.

\textbf{Synthetic Task Generators.}\quad
% We compare three generators under a fixed dataset budget.  (i) Context Conditioned Bayesian Table Synthesis samples: We generate synthetic tabular datasets by coupling an instruction-tuned LLM with a Bayesian network (BN). First, we sample diverse dataset topics by prompting the LLM to produce a large ``topic universe'' grounded in a broad ``domain universe'' (controlled by the number of domains as 500 and number of topics as 1000). For each selected topic, the LLM outputs a dataset specification consisting of a text dataset description and feature. Next, we construct a per-topic BN over these features: we generate a directed acyclic graph (DAG). We then parameterize conditional probability tables (CPTs) for each node in topological order. Continuous variables are represented via 4-bin quartile mixtures over the declared range, and conditioning on continuous parents is implemented by discretizing parent values into quartiles (Q1–Q4), enabling mixed discrete/continuous parent sets within a unified CPT interface. Finally, we generate rows vi a sampling over the DAG.
We compare the following four generators under a fixed generation budget (further details in Supp.~Mat.~$\S$\,\ref{sec:llm}):
%, treating all of them as alternative weak-instantiation mechanisms for the same semantic contexts. 
\begin{enumerate}[leftmargin=*,topsep=-6pt, noitemsep, wide]
\item[\textbf{(i) Our Task Expansion and Weak-Instantiation:}]
We generate 500 dataset topics ($\S$\,\ref{sec:method:syn}) and sample 20 semantic data contexts per topic (including textual dataset descriptions, textual features descriptions, feature-types). 
For each feature set, we prompt an LLM once to propose a global edge list in topological order. Edges that introduce cycles are discarded to ensure a valid DAG. We then generate 1{,}000 rows via topological sampling over the resulting DAG using the generated conditional probability tables ($\S$\,\ref{sec:method:weak}), producing weak instantiations of the semantic contexts.

\item[\textbf{(ii) StructSynth Weak-Instantiation:}]
As an alternative weak-instantiation baseline, we use \textsc{StructSynth} \citep{liu2025structsynthleveragingllmsstructureaware}. Because \textsc{StructSynth} does not perform task expansion, we apply it to initialize values for semantic data contexts generated in $\S$\,\ref{sec:method:syn}.
\textsc{StructSynth} originally synthesizes instances via LLM-guided prompting conditioned on a few-shot support set and dataset context. In our setting, we remove the support set and prompt the LLM using only our generated dataset contexts.
While both \textsc{StructSynth} and our weak-instantiation ($\S$\,\ref{sec:method:weak}) construct intermediate DAGs, their procedures differ. \textsc{StructSynth} incrementally builds the DAG via iterative breadth-first LLM queries, conditioning on the current graph state, and subsequently autoregressively generates each feature conditioned on its parents through LLM generation. In contrast, TEXR generates the full DAG in a single query and samples values from the generated conditional probability tables ($\S$\,\ref{sec:method:weak}). As in \citep{liu2025structsynthleveragingllmsstructureaware}, all generated values are validated and coerced to the inferred schema for \textsc{StructSynth} (e.g., categorical values mapped to valid categories, continuous values clipped to valid ranges).

\item[\textbf{(iii) CLLM Weak-Instantiation:}]
We consider the Curated LLM (CLLM) pipeline of \citet{seedat2024curatedllmsynergyllms} as an additional weak-instantiation baseline. In its original form, CLLM generates synthetic rows from an in-context few-shot support set followed by a curation stage.
Since CLLM is not designed for zero-shot task expansion, we adapt it by removing the support set and instead providing the same hypothetical dataset contexts generated in $\S$\,\ref{sec:method:syn}. 
Following \citet{seedat2024curatedllmsynergyllms}, we retain the curation mechanism: an XGBoost model scores each synthetic instance using prediction confidence and aleatoric uncertainty, and only the curated subset is retained for downstream training.

\item[\textbf{(iv) TabPFN-based Data Expansion:}] 
TabPFN-style synthetic generation has been shown to improve cross-tabular generalization 
\citep{hollmann2023tabpfntransformersolvessmall, ye2025closerlooktabpfnv2, grinsztajn2026tabpfn25advancingstateart}. 
Within the TEXR formulation, it naturally provides task expansion by generating entirely synthetic datasets without reliance on real data. However, TabPFN operates over abstract feature spaces and does not produce semantic data contexts, instead generating fully instantiated tables with unnamed, uninterpretable columns.
To adapt TabPFN for our OCM setting, we pair its synthetic datasets with \textsc{TabMeta} \citep{anonymous2024tabmeta} to construct corresponding plausible table- and column-level descriptions. Following the TabPFN generation procedure \citep{hollmann2023tabpfntransformersolvessmall}, datasets are sampled from structural causal model (SCM) priors that induce random DAGs with nonlinear structural equations, where values are produced via forward propagation through randomly parameterized neural components. 
Because these priors yield structurally valid but semantically ungrounded features, we generate corresponding semantic data contexts post hoc via \textsc{TabMeta}-style prompting. Multiple LLM judges score candidates for relevance, factuality, and coherence, and we select the highest-scoring annotations via majority voting. This produces synthetic datasets with semantic data contexts while leaving the underlying synthetic feature values unchanged.
\end{enumerate}

% \textbf{TabPFN-based Task Expansion and Weak-Instantiation:} Following the TabPFN paradigm ~\citep{hollmann2023tabpfntransformersolvessmall}, we generate synthetic datasets by sampling random data from structural causal models(SCM). Each dataset is sampled from an SCM prior that is used to generated a random DAG with non-linear structural equations, where values are generated by forward propagation through randomly parameterized neural components. Since these priors produce feature values without semantic meaning (i.e., columns are structurally valid but unnamed/uninterpretable), we attach \textsc{TabMeta} style metadata \citep{anonymous2024tabmeta} by generating a few table-level description and column-level descriptions after synthesis of the synthetic data. Following, multiple LLM judges independently score candidate metadata for factors such as relevance, factuality, and coherence, then we do max-voting of the scores to select the highest-quality metadata annotations. This results in synthetic dataset with dataset and feature information without altering the underlying values in the synthetic tables. This allows a fairer comparison to generators that produce both data and metadata jointly.

\vspace{6pt}
\textbf{Pseudo-value Refinement.}\quad
We similarly compare three pseudo-value refinement techniques to update weakly instantiated synthetic values. A final full model (of corresponding OCM backbones) is trained using all real datasets and synthetic value refined data.
\begin{enumerate}[leftmargin=*,topsep=-6pt, noitemsep, wide]
\item[\textbf{(a) Our Cross-Model Refinement:}] We train two OCM models on disjoint partitions of real and weakly instantiated synthetic data to iteratively refine synthetic values (see $\S$\,\ref{sec:method:cross}, \emph{Step 3} of Fig.~\ref{fig:texr}). Refined values are generated autoregressively, with each feature conditioned on previously generated features within the same instance.\looseness-1
%\textbf{a) Our Cross-Model Refinement},  leverages two OCM models (base on selected OCM backbone) trained on distinct partition of real and weakly-instantiated synthetic data to refine the values of synthetic instances (see $\S$\,\ref{sec:method:cross}, \emph{Step 3} of Fig.~\ref{fig:texr}). Cross-refined values are produced autoregressively, where the next refined instance feature depends on previously generated instance features for that instance.\looseness-1

% \textbf{b) Direct Refinement (DR)}: We compare to a pseudo-value approach that, instead of our cross-refinement approach, directly uses a single OCM model trained on all data to refine the weakly-instantiated values. That is, we train a single preliminary OC model $p'$ trained with all real and weakly-instantiated synthetic data, $\mathfrak{D}\cup\mathfrak{W}$; after training, $p'$ is used to refine values to produce $\mathfrak{W}'$ with instances $\tilde{\mathbf{X}}^{' (k)}\sim p'(\cdot 
% \mid \tilde{\mathbf{S}}^{(k)})$ similarly generated autoregressively. 
\item [\textbf{b) Direct Refinement (DR):}]
As a baseline, we consider a single-model pseudo-value refinement strategy. Instead of cross-refinement, we train one OCM model $p'$ on the full set of real and weakly instantiated synthetic data, $\mathfrak{D} \cup \mathfrak{W}$. After training, $p'$ is used to re-generate (refine) the synthetic values, producing an updated collection $\mathfrak{W}'$ with instances sampled autoregressively as 
$\tilde{\mathbf{X}}^{'(k)} \sim p'(\cdot \mid \tilde{\mathbf{S}}^{(k)})$.

\item[\textbf{c) Cluster-Aware Self-Training (CAST):}]
We additionally compare against CAST \cite{kim2023cast}, a recent tabular pseudo-labeling method that recalibrates prediction confidence using density regularization. As with Direct Refinement, we use a single refinement model (no cross-modeling), but 
employ CAST’s density-aware confidence scores to guide value refinement, using these calibrated confidences when sampling refined values to better propagate instance-level uncertainty to the final model.

\end{enumerate}

\subsection{OCM With Data Expansion}

%We evaluate the baseline models under few-shot/zero-shot and many-shot finetuning setup on unseen held-out 15 classification datasets and 5 regression datasets. We follow the evaluation protocol suggested by the authors procedure for few-shot conditioning like perfoming in-context learning or finetuning on shots. 

\textbf{Few-shot OCM.}\quad
We evaluate data expansion strategies under few-shot inference (${\leq}5$ labeled examples), following each backbone’s protocol. Fig.~\ref{fig:few-shot} reports five-shot F1 across backbones. All expansion methods outperform real-data-only training (\textsc{None}), which only trains on the OpenTabs collection; however, \textsc{TEXR} achieves the largest gains, improving performance by an average of ${\Delta}{=}9.8$ percentage points over real-data-only (\textsc{None}) across models. As shown in Fig.~\ref{fig:zero-few-shot}, \textsc{TEXR} consistently provides the strongest and most stable improvements as the number of shots varies. Similar trends hold for regression (Supp.~Mat.~Fig.~\ref{fig:reg}), where \textsc{TEXR} yields the largest average RMSE reduction.
% \textbf{Few-shot OCM}\quad We begin by studying the effects of various data expansion strategies on OCM when few (${\leq}5$) labeled examples are given at inference time. We follow the respective few-shot learning protocols for each OCM backbone. 
% Fig.~\ref{fig:few-shot} shows the five-shot classification results (F1) across OCM backbones and data expansion strategies.
% As a result of our novel systematic study, we observe that \emph{all data expansion strategies lead to improvement} over the real-data-only baseline (\textsc{None}), which is trained only on the collection of real datasets from OpenTabs.
% However, we see the most substantial improvements by utilizing our proposed \textsc{TEXR}, which yields an average improvement of ${\Delta}{=}9.8$ percentage points (pp) over real-data-only OCM training across backbones. Fig.~\ref{fig:zero-few-shot} shows how OCM is affected as we vary the number of shots. Again, we see that \textsc{TEXR} provides the largest improvement, and most consistent improvement as the number of shots vary.
% We also see similar patterns on regression tasks (see Supp.~Mat.~Fig.~\ref{fig:reg}), where TEXR yields the largest average RMSE improvement.
%of ${\Delta}{=}.154$. Given these low-data OCM tasks, this represents a substantial improvement of open-world conditioning capabilities for a diverse group of models. 
% In particular, our full pipline improves over the baseline by \(\Delta=0.107\) for \textsc{TP-BERTa}, \(\Delta=0.121\) for \textsc{CM2}, and \(\Delta=0.089\) for \textsc{ASPIRE}. 

\begin{figure}[h]
  \centering
  \includegraphics[width=\linewidth]{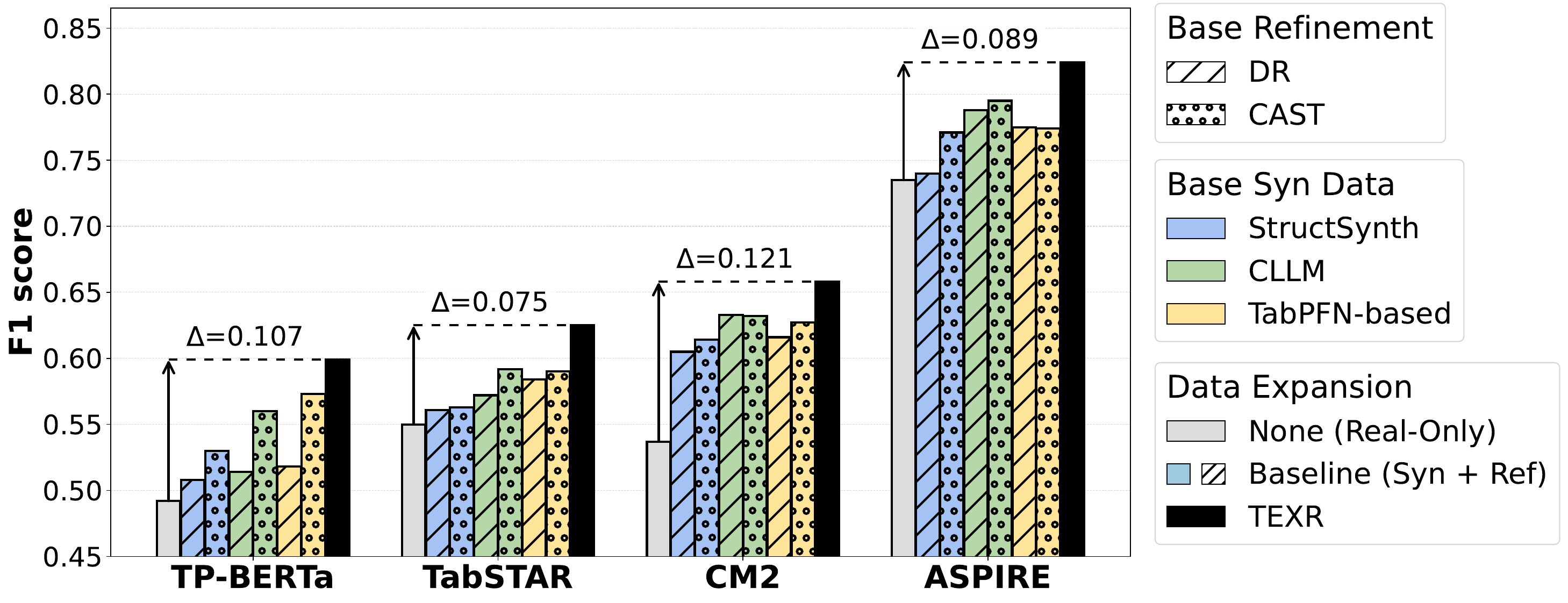}
  \caption{\textbf{Five-shot F1 comparison} across OCM backbones and data expansion strategies: synthetic generator (color), refinement (hatch), real-data-only (gray), and \textsc{TEXR} (black).}
  \label{fig:few-shot}
\end{figure}

\begin{figure}[h]
  \centering
  \includegraphics[width=\linewidth]{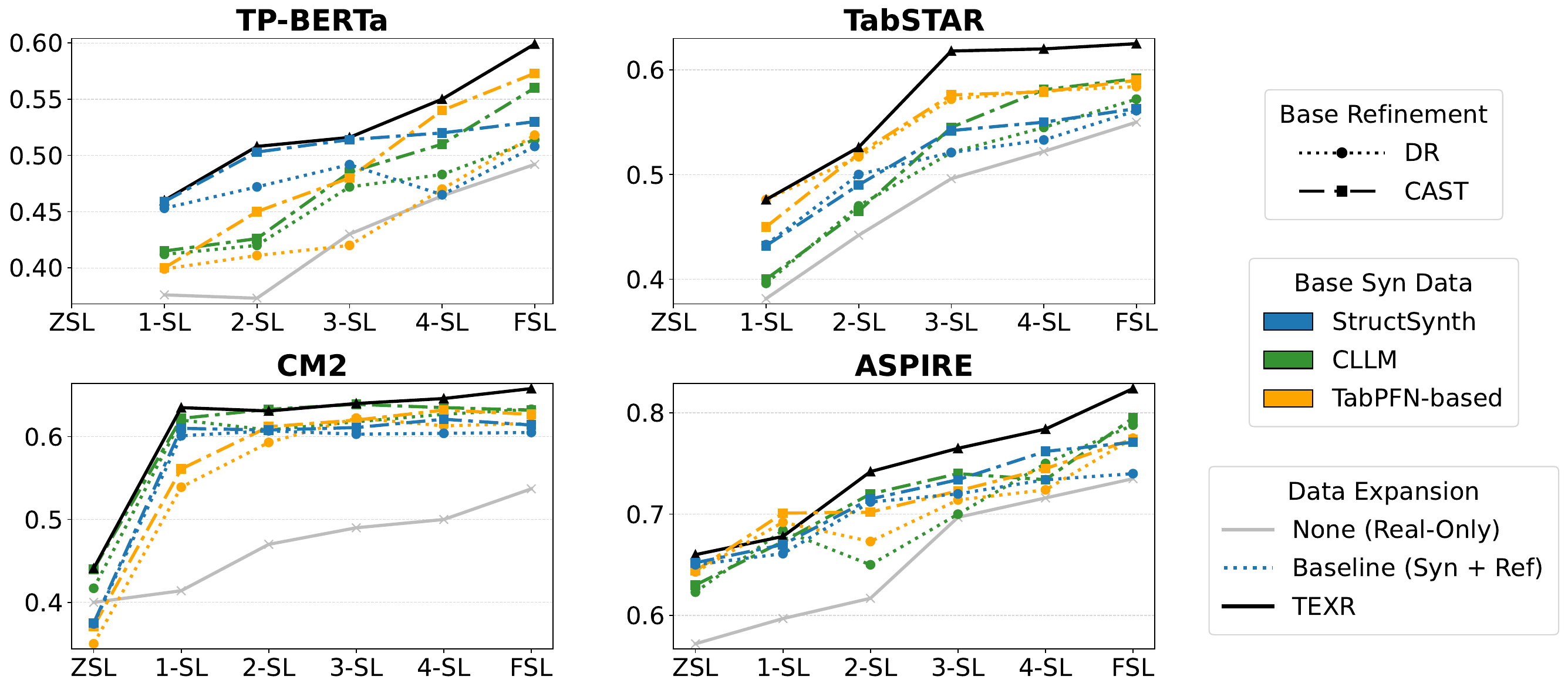}
  \caption{\textbf{Zero-shot (ZSL) to Five-shot (5SL) F1 comparison} across data expansion strategies and OCM backbones. (TP-BERTa and TabSTAR are unable to perform ZSL.)}
  \label{fig:zero-few-shot}
\end{figure}

% \begin{figure}[h]
%   \centering
%   \includegraphics[width=\linewidth]{figures/regression.pdf}
%   \caption{\textbf{Five-shot RMSE comparison} across data expansion strategies over various OCM backbones.}
%   \label{fig:reg}
% \end{figure}

% \textbf{Many-shot OCM}\quad Although our main focus is OCM in data-limited scenarios, we see that TEXR also aides OC models when performing \emph{many-shot} open-world conditioning at inference time. That is, in this setting, we study the effect of data expansion on OCM backbones when they are presented with many examples from the inference-time prediction task (several thousands). Many-shot adaptation is done in accordance to the original protocols of each OCM backbone. We again see substantial improvements over real-data-only training in this many-shot setting. Notably, our proposed TEXR data expansion gives an average improvement of ${\Delta}{=}5.375$ percentage points over real-data-only training for backbones. This a considerable gain, given that in this many-shot setting, the OC models are not as reliant on cross-tabular patterns since they observe many inference-time labeled examples from target tasks.

\textbf{Many-shot OCM.}\quad
Although TEXR is motivated by data-limited regimes, it also improves performance in many-shot conditioning, where backbones observe thousands of labeled inference-time examples under their standard adaptation protocols. Fig.~\ref{fig:finetune} shows consistent gains over real-data-only training, with \textsc{TEXR} improving F1 by an average of ${\Delta}{=}5.38$ percentage points across backbones. This is notable given that, in the many-shot setting, models rely less on cross-tabular transfer due to abundant task-specific supervision. Similar patterns hold for many-shot regression (Supp.~Mat.~Fig.~\ref{fig:reg_finetune}), where \textsc{TEXR} yields the largest average RMSE reduction.

\begin{figure}[h]
  \centering
  \includegraphics[width=\linewidth]{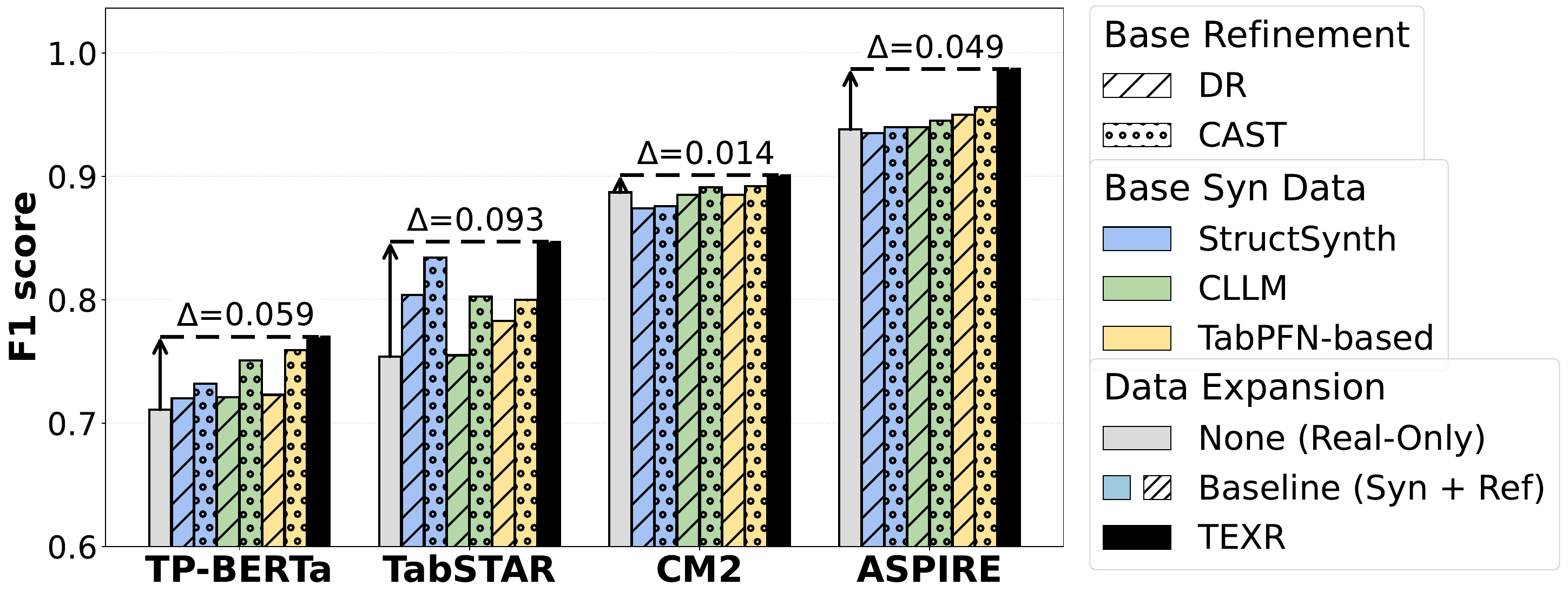}
  \caption{\textbf{Many-shot F1 comparison} across data expansion strategies over various OCM backbones.}
  \label{fig:finetune}
\end{figure}

\subsection{Ablations}
We further analyze \textsc{TEXR} via targeted ablations of its task expansion and pseudo-value refinement components, focusing on the strong \textsc{ASPIRE} backbone.
First, fixing cross-refinement ($\S$\,\ref{sec:method:cross}), we vary the expansion generator among \textsc{StructSynth}, \textsc{CLLM}, and \textsc{TabPFN}-based synthesis. Second, fixing task expansion ($\S$,\ref{sec:method}), we compare refinement strategies: no refinement, direct refinement, and CAST. 
The full pipeline (\textsc{TE+XR}) achieves the largest gain over real-data-only (\textsc{None}). Our task expansion (\textsc{TE}) outperforms alternative generators (\textsc{*+XR}), highlighting the strength of our synthesis and weak-instantiation design. Refinement—especially cross-refinement—further improves over weak instantiation (\textsc{TE+NoR}), confirming its added OCM value.
\textbf{z-score filtering:} removing the filter in Eq.~\eqref{eq:zscore} reduces performance from 0.824 to 0.815, indicating it helps select informative synthetic tasks.

\begin{figure}[h]
  \centering
  \includegraphics[width=\linewidth]{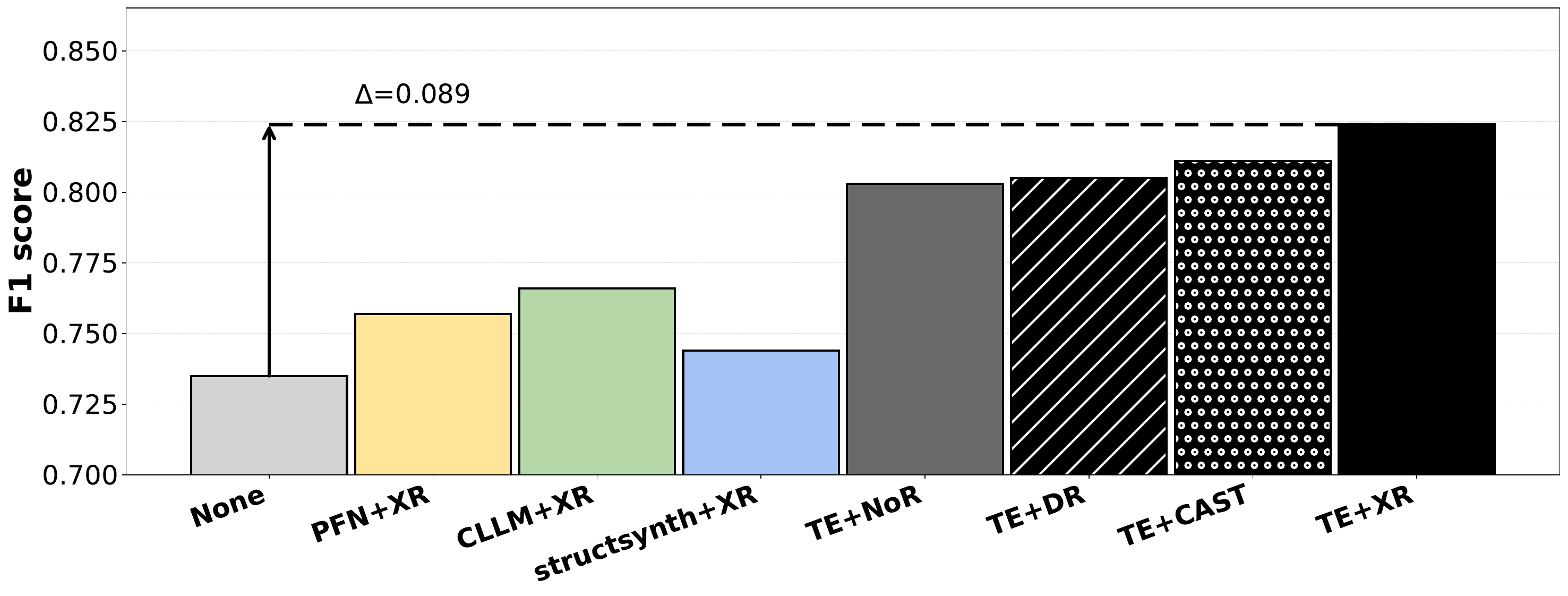}
  \caption{\textbf{Data expansion ablation} on \textsc{ASPIRE}.}
  \label{fig:ablation}
\end{figure}
% \paragraph{Single model training vs single model with synthetic with single model with inter-labelling}
% We ablate the contribution of synthetic data and pseudo-label refinement in our iterative procedure. We compare across pre-refinement stages of the procedure to study the change in performance for the models contributed by the synthetic data. We study the ASPIRE model as it is the best performing model across all evaluation settings. ASPIRE pre-trained on real data yields an average F1 score \textbf{0.735}. Augmenting training with synthetic data boosts the performance to \textbf{0.774} and \textbf{0.752} for the inter-labeling two models. Corresponding to gains of \textbf{+0.039} and \textbf{+0.027} over the real-only baseline indicating addition of synthetic data is beneficial. 

% \paragraph{No refinement}
% To isolate the effect of the refinement stage, we also train \textsc{ASPIRE} on our task expansion and weak-instantiation. This setting we train \textsc{ASPIRE} on this pool of synthetic data and finetune on the real data as specified in ($\S$\,\ref{sec:experiment}). We notice an improvement of \textbf{+0.068}

% \paragraph{LLM zero shot (with no BN)} \todojo{probably lowest priority}.

% % \paragraph{ASPIRE with all combinations of  pseudolabelling and synthetic data}
% \todo{1. some other synthetic data intialization for cross refinement. 2. our task expansion with someother refinement (SSP + CAST) }

\section{Discussion} \label{sec:discussion}

\textbf{A new semantically-grounded tabular prior.}\quad
TabPFN’s synthetic-data paradigm \citep{hollmann2023tabpfntransformersolvessmall, grinsztajn2026tabpfn25advancingstateart} is grounded in a structured generative prior over latent causal graphs, feature marginals, target mechanisms, and noise processes; its inductive bias is explicitly statistical and row-level, shaped by synthetic task distributions that approximate classical machine-learning datasets, yielding strong cross-tabular performance. In contrast, we introduce a \emph{complementary} open-world semantically-grounded prior for generating weakly-instantiated synthetic datasets that encodes broad knowledge of domains, entities, and attribute relationships. As a proper cross tabular instance prior, our LLM-based prior remains largely independent of structured feature–value matrices. This follows from TEXR’s reliance on generalist LLMs, which are pretrained on massive natural-language corpora (e.g., web text, books, code) rather than curated tabular row datasets \citep{brown2020language, touvron2023llama}.
The tight coupling between semantics and plausible weak instance values makes our dataset prior especially adept at improving open-world conditional modeling, as evidenced by the results above.

\textbf{Release of Synthetic Datasets.}\quad 
In addition to our code, we will publicly release\footnote{Upon publication.} a collection of 10{,}000+ \emph{refined} synthetic datasets ($\S$\,\ref{sec:method:cross}) spanning a broad range of topics (Fig.~\ref{fig:word_cloud}, left). 
As supported by our results, these datasets improve multiple OCM backbones, suggesting that they encode useful structural patterns coupled with semantic data contexts that complement large real-world collections such as OpenTabs \citep{ye2024crosstablemaskedpretrainingweb}. A t-SNE visualization (Fig.~\ref{fig:word_cloud}, right) indicates that task expansion increases coverage of regions underrepresented in OpenTabs. We hope this released collection—and the accompanying pipeline—serves as a practical resource for cross-tabular research.

\begin{figure}[h]
  \centering
  \includegraphics[trim={0cm 20.5cm 0cm 20.5cm}, clip, width=.48\linewidth]{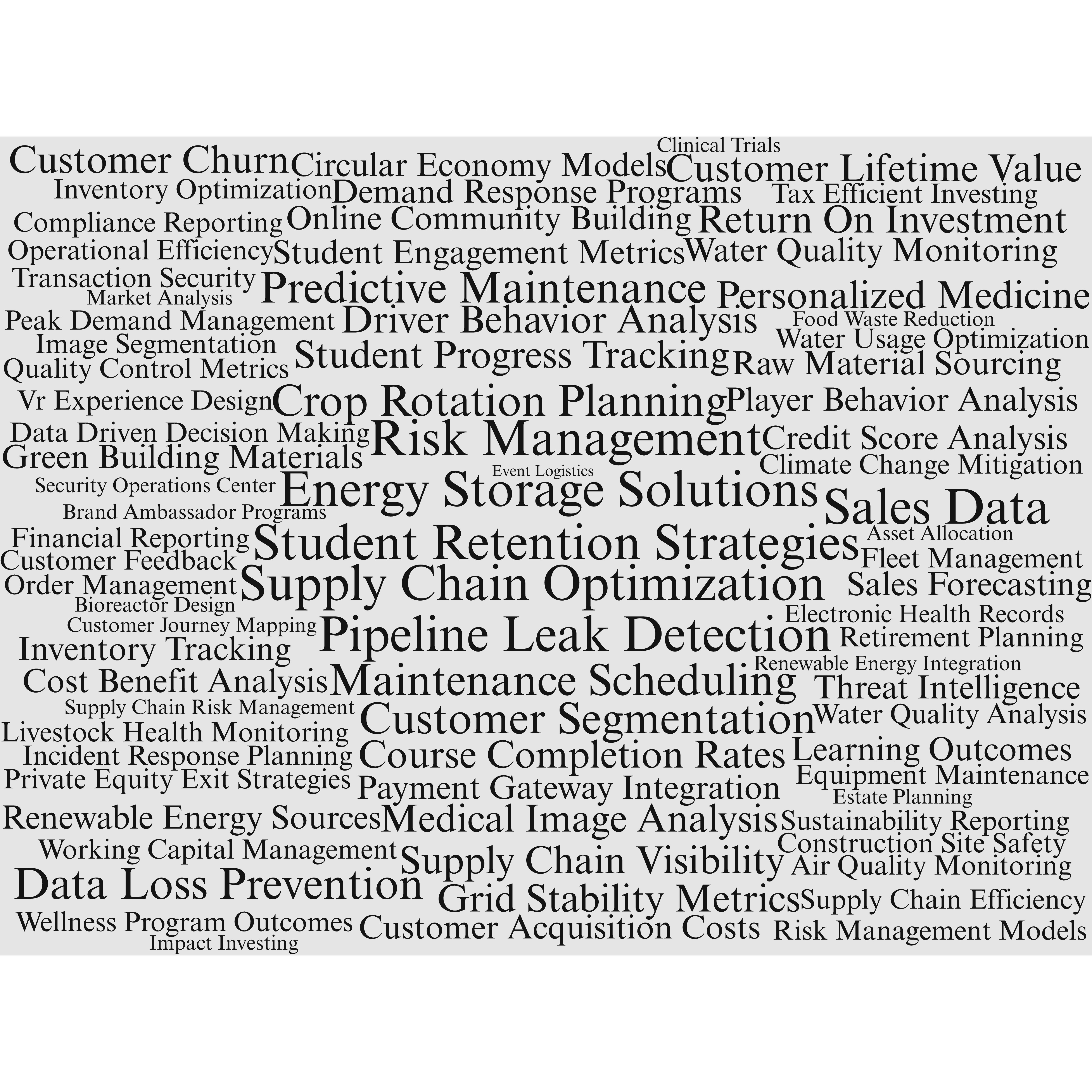}
  \includegraphics[trim={0cm 0cm 0cm 0cm}, clip, width=.44\linewidth]{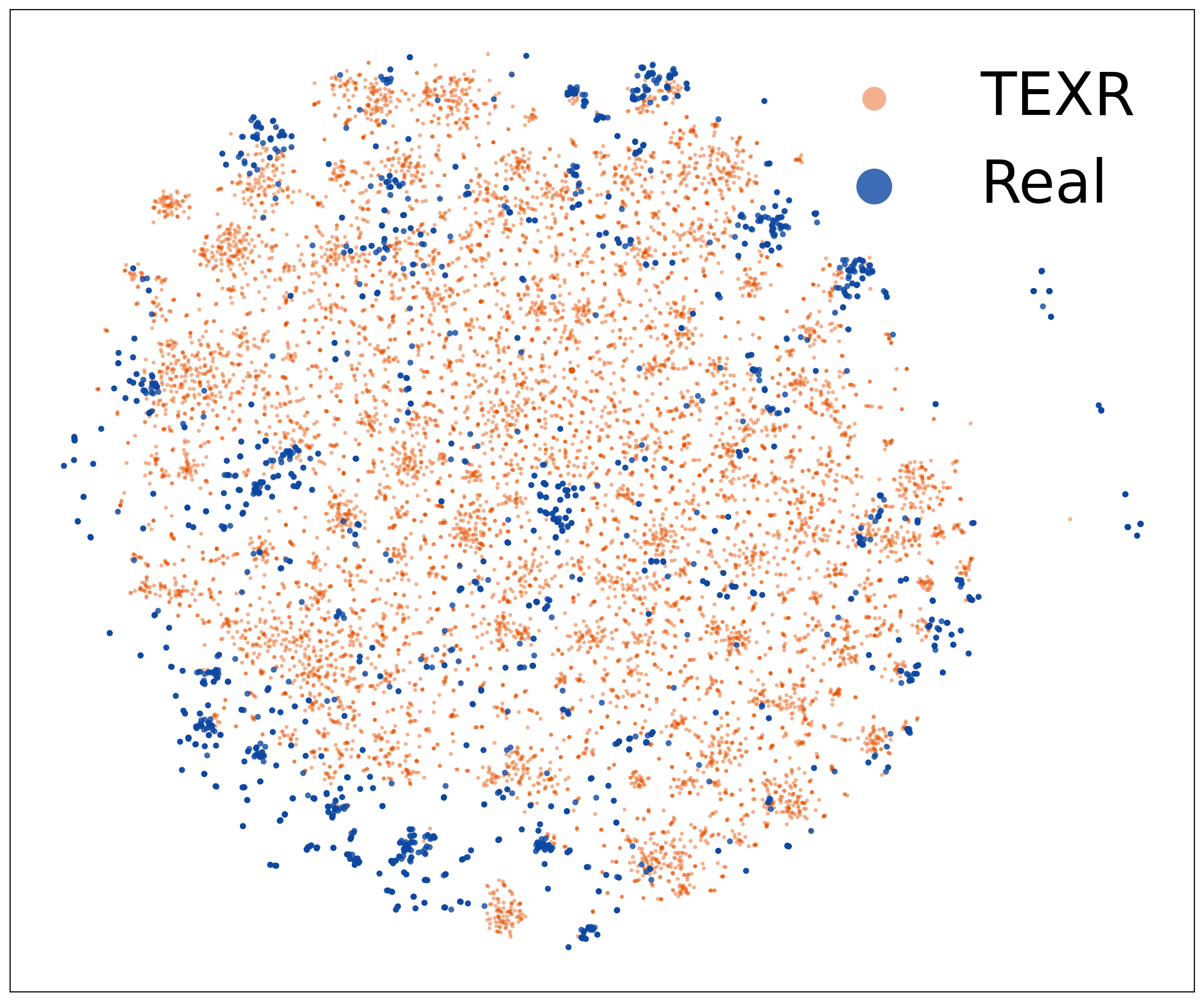}
  \caption{Left, representative \textbf{word cloud} of our synthetic datasets. Right, t-SNE visualization of \textbf{space of synthetic vs.~real data contexts}.}
  \label{fig:word_cloud}
\end{figure}

\section{Conclusion}
In conclusion, TEXR provides a semi-supervised framework for open-world conditional modeling that expands effective task coverage through structured task expansion and cross-model refinement. By treating semantic data contexts as generative inputs, ``unlabeled'' contexts can be transformed into weakly instantiated tabular tasks and refined into useful training signal. This yields a novel, complementary semantic prior to existing statistically grounded synthetic paradigms. Across heterogeneous backbones, TEXR improves zero-/few- and many-shot performance, and our release of 10{,}000+ refined synthetic datasets offers a practical resource for future work on cross-dataset tabular learning. Potentially fruitful future directions include the integration of multiple data expansion strategies jointly, an active-learning search over task expansion, and extensions into multiple modalities.
%\label{sec:discussion}
% References

\bibliography{uai2026-template}

@article{brown2020language,
  title={Language models are few-shot learners},
  author={Brown, Tom and Mann, Benjamin and Ryder, Nick and Subbiah, Melanie and Kaplan, Jared D and Dhariwal, Prafulla and Neelakantan, Arvind and Shyam, Pranav and Sastry, Girish and Askell, Amanda and others},
  journal={Advances in neural information processing systems},
  volume={33},
  pages={1877--1901},
  year={2020}
}

@article{touvron2023llama,
  title={Llama: Open and efficient foundation language models},
  author={Touvron, Hugo and Lavril, Thibaut and Izacard, Gautier and Martinet, Xavier and Lachaux, Marie-Anne and Lacroix, Timoth{\'e}e and Rozi{\`e}re, Baptiste and Goyal, Naman and Hambro, Eric and Azhar, Faisal and others},
  journal={arXiv preprint arXiv:2302.13971},
  year={2023}
}

@article{strauss2021arbitrary,
  title={Arbitrary conditional distributions with energy},
  author={Strauss, Ryan and Oliva, Junier B},
  journal={Advances in Neural Information Processing Systems},
  volume={34},
  pages={752--763},
  year={2021}
}

@inproceedings{lee2013pseudo,
  title={Pseudo-label: The simple and efficient semi-supervised learning method for deep neural networks},
  author={Lee, Dong-Hyun and others},
  booktitle={Workshop on challenges in representation learning, ICML},
  pages={896},
  year={2013},
  organization={Atlanta}
}

@article{kim2023cast,
  title={CAST: Cluster-Aware Self-Training for Tabular Data via Reliable Confidence},
  author={Kim, Minwook and Kim, Juseong and Kim, Ki Beom and Song, Giltae},
  journal={arXiv preprint arXiv:2310.06380},
  year={2023}
}

@inproceedings{wang2024meditab,
  title={MediTab: scaling medical tabular data predictors via data consolidation, enrichment, and refinement},
  author={Wang, Zifeng and Gao, Chufan and Xiao, Cao and Sun, Jimeng},
  booktitle={Proceedings of the Thirty-Third International Joint Conference on Artificial Intelligence},
  pages={6062--6070},
  year={2024}
}

@inproceedings{nam2023stunt,
  title={STUNT: FEW-SHOT TABULAR LEARNING WITH SELF-GENERATED TASKS FROM UNLABELED TABLES},
  author={Nam, Jaehyun and Tack, Jihoon and Lee, Kyungmin and Lee, Hankook and Shin, Jinwoo},
  booktitle={11th International Conference on Learning Representations, ICLR 2023},
  year={2023},
  organization={International Conference on Learning Representations, ICLR}
}

@article{kim2023revisiting,
  title={Revisiting self-training with regularized pseudo-labeling for tabular data},
  author={Kim, Minwook and Kim, Juseong and Song, Giltae},
  journal={arXiv preprint arXiv:2302.14013},
  year={2023}
}

@inproceedings{farahani2022clpl,
  title={CLPL: A Self-supervised Contrastive Learning Pseudo-Labeling Framework for Tabular Data},
  author={Farahani, Abolfazl and Tonekaboni, Navid Hashemi and Rasheed, Khaled and Arabnia, Hamid R},
  booktitle={2022 International Conference on Computational Science and Computational Intelligence (CSCI)},
  pages={7--12},
  year={2022},
  organization={IEEE}
}

@article{yoon2020vime,
  title={Vime: Extending the success of self-and semi-supervised learning to tabular domain},
  author={Yoon, Jinsung and Zhang, Yao and Jordon, James and Van der Schaar, Mihaela},
  journal={Advances in neural information processing systems},
  volume={33},
  pages={11033--11043},
  year={2020}
}

@misc{yan2024makingpretrainedlanguagemodels,
      title={Making Pre-trained Language Models Great on Tabular Prediction}, 
      author={Jiahuan Yan and Bo Zheng and Hongxia Xu and Yiheng Zhu and Danny Z. Chen and Jimeng Sun and Jian Wu and Jintai Chen},
      year={2024},
      eprint={2403.01841},
      archivePrefix={arXiv},
      primaryClass={cs.CL},
      url={https://arxiv.org/abs/2403.01841}, 
}

@misc{ye2024crosstablemaskedpretrainingweb,
      title={Towards Cross-Table Masked Pretraining for Web Data Mining}, 
      author={Chao Ye and Guoshan Lu and Haobo Wang and Liyao Li and Sai Wu and Gang Chen and Junbo Zhao},
      year={2024},
      eprint={2307.04308},
      archivePrefix={arXiv},
      primaryClass={cs.LG},
      url={https://arxiv.org/abs/2307.04308}, 
}

@misc{brahmavar2026universalneurallikelihoodinference,
      title={Towards Universal Neural Likelihood Inference}, 
      author={Shreyas Bhat Brahmavar and Yang Li and Qiyang Liu and Shashank Srivastava and Junier Oliva},
      year={2025},
      eprint={2508.09100},
      archivePrefix={arXiv},
      primaryClass={cs.LG},
      url={https://arxiv.org/abs/2508.09100}, 
}

@misc{wen2024supervisedgenerativenovelparadigm,
      title={From Supervised to Generative: A Novel Paradigm for Tabular Deep Learning with Large Language Models}, 
      author={Xumeng Wen and Han Zhang and Shun Zheng and Wei Xu and Jiang Bian},
      year={2024},
      eprint={2310.07338},
      archivePrefix={arXiv},
      primaryClass={cs.LG},
      url={https://arxiv.org/abs/2310.07338}, 
}

@misc{hegselmann2023tabllmfewshotclassificationtabular,
      title={TabLLM: Few-shot Classification of Tabular Data with Large Language Models}, 
      author={Stefan Hegselmann and Alejandro Buendia and Hunter Lang and Monica Agrawal and Xiaoyi Jiang and David Sontag},
      year={2023},
      eprint={2210.10723},
      archivePrefix={arXiv},
      primaryClass={cs.CL},
      url={https://arxiv.org/abs/2210.10723}, 
}

@misc{han2024largelanguagemodelsautomatically,
      title={Large Language Models Can Automatically Engineer Features for Few-Shot Tabular Learning}, 
      author={Sungwon Han and Jinsung Yoon and Sercan O Arik and Tomas Pfister},
      year={2024},
      eprint={2404.09491},
      archivePrefix={arXiv},
      primaryClass={cs.LG},
      url={https://arxiv.org/abs/2404.09491}, 
}

@misc{brown2020languagemodelsfewshotlearners,
      title={Language Models are Few-Shot Learners}, 
      author={Tom B. Brown and Benjamin Mann and Nick Ryder and Melanie Subbiah and Jared Kaplan and Prafulla Dhariwal and Arvind Neelakantan and Pranav Shyam and Girish Sastry and Amanda Askell and Sandhini Agarwal and Ariel Herbert-Voss and Gretchen Krueger and Tom Henighan and Rewon Child and Aditya Ramesh and Daniel M. Ziegler and Jeffrey Wu and Clemens Winter and Christopher Hesse and Mark Chen and Eric Sigler and Mateusz Litwin and Scott Gray and Benjamin Chess and Jack Clark and Christopher Berner and Sam McCandlish and Alec Radford and Ilya Sutskever and Dario Amodei},
      year={2020},
      eprint={2005.14165},
      archivePrefix={arXiv},
      primaryClass={cs.CL},
      url={https://arxiv.org/abs/2005.14165}, 
}

@misc{wei2022emergentabilitieslargelanguage,
      title={Emergent Abilities of Large Language Models}, 
      author={Jason Wei and Yi Tay and Rishi Bommasani and Colin Raffel and Barret Zoph and Sebastian Borgeaud and Dani Yogatama and Maarten Bosma and Denny Zhou and Donald Metzler and Ed H. Chi and Tatsunori Hashimoto and Oriol Vinyals and Percy Liang and Jeff Dean and William Fedus},
      year={2022},
      eprint={2206.07682},
      archivePrefix={arXiv},
      primaryClass={cs.CL},
      url={https://arxiv.org/abs/2206.07682}, 
}

@misc{hollmann2023tabpfntransformersolvessmall,
      title={TabPFN: A Transformer That Solves Small Tabular Classification Problems in a Second}, 
      author={Noah Hollmann and Samuel Müller and Katharina Eggensperger and Frank Hutter},
      year={2023},
      eprint={2207.01848},
      archivePrefix={arXiv},
      primaryClass={cs.LG},
      url={https://arxiv.org/abs/2207.01848}, 
}

@misc{ye2025closerlooktabpfnv2,
      title={A Closer Look at TabPFN v2: Understanding Its Strengths and Extending Its Capabilities}, 
      author={Han-Jia Ye and Si-Yang Liu and Wei-Lun Chao},
      year={2025},
      eprint={2502.17361},
      archivePrefix={arXiv},
      primaryClass={cs.LG},
      url={https://arxiv.org/abs/2502.17361}, 
}

@misc{grinsztajn2026tabpfn25advancingstateart,
      title={TabPFN-2.5: Advancing the State of the Art in Tabular Foundation Models}, 
      author={Léo Grinsztajn and Klemens Flöge and Oscar Key and Felix Birkel and Philipp Jund and Brendan Roof and Benjamin Jäger and Dominik Safaric and Simone Alessi and Adrian Hayler and Mihir Manium and Rosen Yu and Felix Jablonski and Shi Bin Hoo and Anurag Garg and Jake Robertson and Magnus Bühler and Vladyslav Moroshan and Lennart Purucker and Clara Cornu and Lilly Charlotte Wehrhahn and Alessandro Bonetto and Bernhard Schölkopf and Sauraj Gambhir and Noah Hollmann and Frank Hutter},
      year={2026},
      eprint={2511.08667},
      archivePrefix={arXiv},
      primaryClass={cs.LG},
      url={https://arxiv.org/abs/2511.08667}, 
}

@misc{qu2025tabicltabularfoundationmodel,
      title={TabICL: A Tabular Foundation Model for In-Context Learning on Large Data}, 
      author={Jingang Qu and David Holzmüller and Gaël Varoquaux and Marine Le Morvan},
      year={2025},
      eprint={2502.05564},
      archivePrefix={arXiv},
      primaryClass={cs.LG},
      url={https://arxiv.org/abs/2502.05564}, 
}

@misc{zhang2025limixunleashingstructureddatamodeling,
      title={LimiX: Unleashing Structured-Data Modeling Capability for Generalist Intelligence}, 
      author={Xingxuan Zhang and Gang Ren and Han Yu and Hao Yuan and Hui Wang and Jiansheng Li and Jiayun Wu and Lang Mo and Li Mao and Mingchao Hao and Ningbo Dai and Renzhe Xu and Shuyang Li and Tianyang Zhang and Yue He and Yuanrui Wang and Yunjia Zhang and Zijing Xu and Dongzhe Li and Fang Gao and Hao Zou and Jiandong Liu and Jiashuo Liu and Jiawei Xu and Kaijie Cheng and Kehan Li and Linjun Zhou and Qing Li and Shaohua Fan and Xiaoyu Lin and Xinyan Han and Xuanyue Li and Yan Lu and Yuan Xue and Yuanyuan Jiang and Zimu Wang and Zhenlei Wang and Peng Cui},
      year={2025},
      eprint={2509.03505},
      archivePrefix={arXiv},
      primaryClass={cs.LG},
      url={https://arxiv.org/abs/2509.03505}, 
}

@misc{seedat2024curatedllmsynergyllms,
      title={Curated LLM: Synergy of LLMs and Data Curation for tabular augmentation in low-data regimes}, 
      author={Nabeel Seedat and Nicolas Huynh and Boris van Breugel and Mihaela van der Schaar},
      year={2024},
      eprint={2312.12112},
      archivePrefix={arXiv},
      primaryClass={cs.LG},
      url={https://arxiv.org/abs/2312.12112}, 
}

@misc{liu2025structsynthleveragingllmsstructureaware,
      title={StructSynth: Leveraging LLMs for Structure-Aware Tabular Data Synthesis in Low-Data Regimes}, 
      author={Siyi Liu and Yujia Zheng and Yongqi Zhang},
      year={2025},
      eprint={2508.02601},
      archivePrefix={arXiv},
      primaryClass={cs.LG},
      url={https://arxiv.org/abs/2508.02601}, 
}

@article{arazi2025tabstar,
  title={Tabstar: A foundation tabular model with semantically target-aware representations},
  author={Arazi, Alan and Shapira, Eilam and Reichart, Roi},
  journal={arXiv e-prints},
  pages={arXiv--2505},
  year={2025}
}

@article{anonymous2024tabmeta,
  title   = {Tabmeta: Table metadata generation with {LLM}-curated dataset and {LLM}-judges},
  author  = {Anonymous},
  journal = {Submitted to ACL Rolling Review - June 2024},
  note    = {Under review},
  year    = {2024}
}

@misc{borisov2023languagemodelsrealistictabular,
      title={Language Models are Realistic Tabular Data Generators}, 
      author={Vadim Borisov and Kathrin Seßler and Tobias Leemann and Martin Pawelczyk and Gjergji Kasneci},
      year={2023},
      eprint={2210.06280},
      archivePrefix={arXiv},
      primaryClass={cs.LG},
      url={https://arxiv.org/abs/2210.06280}, 
}

@misc{zhao2025tabulaharnessinglanguagemodels,
      title={TabuLa: Harnessing Language Models for Tabular Data Synthesis}, 
      author={Zilong Zhao and Robert Birke and Lydia Chen},
      year={2025},
      eprint={2310.12746},
      archivePrefix={arXiv},
      primaryClass={cs.LG},
      url={https://arxiv.org/abs/2310.12746}, 
}

@misc{lin2025ctsynfoundationmodelcross,
      title={CTSyn: A Foundation Model for Cross Tabular Data Generation}, 
      author={Xiaofeng Lin and Chenheng Xu and Matthew Yang and Guang Cheng},
      year={2025},
      eprint={2406.04619},
      archivePrefix={arXiv},
      primaryClass={cs.LG},
      url={https://arxiv.org/abs/2406.04619}, 
}

\newpage

\onecolumn

\title{Supplementary Material}

\maketitle

\section{Additional Results}
\label{sec:additional_results}

In this section, we mention additional results and details regarding data pre-processing and pretraining of the models. 

\subsection{Training Data}
We use datasets from OpenTabs \citep{ye2024crosstablemaskedpretrainingweb} for training, validation, and testing. We manually collect dataset descriptions and feature descriptions from UCI ML repository, Kaggle, OpenML, etc. Further, we curate metadata about the dataset by obtaining statistics about the dataset using python functions, for example, collecting potential classes for each target, data type of the feature values etc. Tables which have too few rows and columns, which have unclear and invalid data, are dropped. We identify task as classification or regression based on target value types, followed by min-max normalization of continuous values.

\textbf{ASPIRE}: We use the following specifications while implementing ASPIRE \citep{brahmavar2026universalneurallikelihoodinference}:  Set Transformer-based architecture with model dimension 768, 8 attention heads, and 32 inducing points. The intra-instance Set2Set module and inter-instance aggregator each use 2 ISAB layers. Feature descriptions and categorical values are grounded via a frozen BERT-base-uncased encoder.  Continuous values are encoded using 256 Fourier features. The prediction heads use a 10-component Mixture-of-Gaussians (MoG) for continuous targets and cosine-similarity with a learned temperature for categorical targets. For few-shot inference, ASPIRE conditions on $k$ labeled support examples. 

\textbf{CM2}: We use the following specifications: number of transformer layers 3, attention heads 8, batch size 256, learning rate for finetuning 3e-4, patience is 5. 
For performing few-shot learning, \citet{ye2024crosstablemaskedpretrainingweb} prescribes $k$-shot learning via finetuning the model with $k$ examples sampled from the train set. We observe that this causes high-variance in the performance, therefore we report the mean of all metrics for CM2 few-shot learning experiments.

\textbf{TP-BERTa}: TP-BERTa \citep{yan2024makingpretrainedlanguagemodels} is initialized from a RoBERTa backbone adapted for tabular prediction and is designed for standard supervised tabular prediction via fine-tuning on downstream datasets. Following the authors’ guidance, we train TP-BERTa per dataset with a larger training budget (up to 200 epochs) and early stopping with patience 50 epochs.

\textbf{TabSTAR}: We follow the TabSTAR specifications from \citep{arazi2025tabstar}. We use a pretrained encoder-only text model (e5-small-v2) to process categorical values and a two-layer MLP for the scalar numeric value that projects $1{\to}2d{\to}d$ with ReLU and dropout.  

\subsection{Evaluation Data}

The following table ~\ref{tab:downstream} details information about the downstream datasets we use as test datasets in our evaluation.
\begin{table}[t]
\caption{Table of the downstream datasets in our experiments, along with different information}
\label{tab:downstream}
\begin{center}
\adjustbox{width=\columnwidth,center}%
{%
\scriptsize
\begin{tabular}{|l|c|r|r|r|r|l|}
\hline
Dataset Name & R/C & Samples & Numerical & Categorical & Label Classes & Source \\
\hline
Breast & C & 699 & 9 & 0 & 2 & https://archive.ics.uci.edu/dataset/15/breast+cancer+wisconsin+original \\
Bone & C & 1479 & 2 & 7 & 3 & https://archive.ics.uci.edu/dataset/3/connectionist+bench+choice \\
Diabetes & C & 768 & 8 & 0 & 2 & https://openml.org/d/37 \\
Vehicle & C & 846 & 18 & 0 & 4 & https://archive.ics.uci.edu/dataset/149/statlog+vehicle+silhouettes \\
Satimage & C & 6430 & 36 & 0 & 6 & https://archive.ics.uci.edu/dataset/146/statlog+landsat+satellite \\
Sick & C & 3772 & 7 & 22 & 2 & http://archive.ics.uci.edu/dataset/102/thyroid+disease \\
Analcatdata & C & 797 & 0 & 4 & 6 & https://pages.stern.nyu.edu/jsimonof/AnalCatData/Data/ \\
Pcl & C & 1109 & 21 & 0 & 2 & https://openml.org/d/1068 \\
Adult & C & 48842 & 6 & 8 & 2 & https://archive.ics.uci.edu/dataset/2/adult \\
PhishingWebsites & C & 11055 & 0 & 30 & 2 & https://archive.ics.uci.edu/dataset/327/phishing+websites \\
Cylinder-bands & C & 540 & 18 & 21 & 2 & https://archive.ics.uci.edu/dataset/32/cylinder+bands \\
MiceProtein & C & 1080 & 77 & 4 & 8 & https://archive.ics.uci.edu/dataset/342/mice+protein+expression \\
Car & C & 1728 & 0 & 6 & 4 & https://archive.ics.uci.edu/dataset/19/car+evaluation \\
Segment & C & 2310 & 19 & 0 & 7 & http://archive.ics.uci.edu/dataset/50/image+segmentation \\
Porto-seguro & R & 2000 & 26 & 31 & 2 & https://openml.org/d/44787 \\
Amazon & C & 2000 & 0 & 9 & 2 & https://openml.org/d/44712 \\
Elevators & R & 16599 & 18 & 19 & - & https://openml.org/d/216 \\
Yprop & R & 8885 & 251 & 0 & - & https://openml.org/d/416 \\
Topo & R & 8885 & 266 & 267 & - & https://openml.org/d/422 \\
SAT11 & R & 4400 & 115 & 1 & - & https://www.cs.ubc.ca/labs/algorithms/Projects/SATzilla/ \\
Diamonds & R & 53940 & 6 & 3 & - & https://openml.org/d/42225 \\
House\_sales & R & 21613 & 20 & 1 & - & https://openml.org/d/42731 \\
\hline
\end{tabular}
}
\end{center}
\end{table}

\begin{figure}[h]
  \centering
  \includegraphics[width=\linewidth]{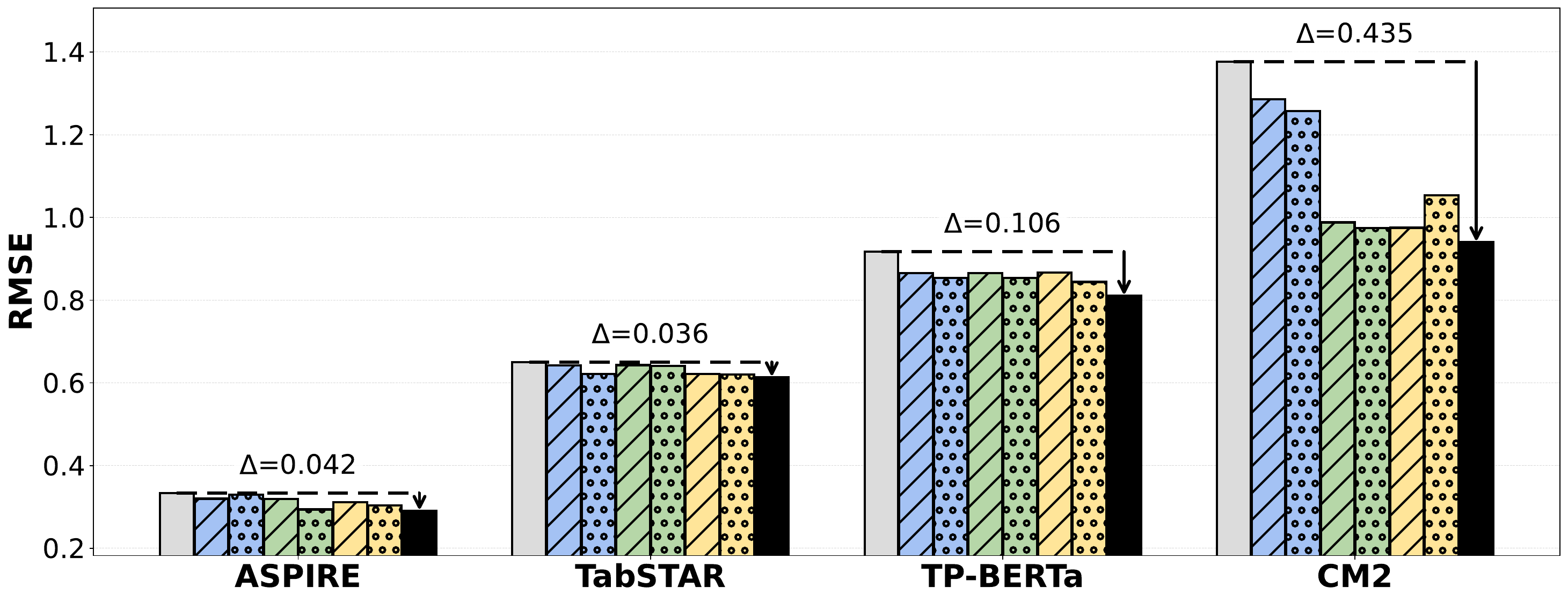}
  \caption{\textbf{Five-shot RMSE comparison} across data expansion strategies over various OCM backbones.}
  \label{fig:reg}
\end{figure}
\begin{figure}[h]
  \centering
  \includegraphics[width=\linewidth]{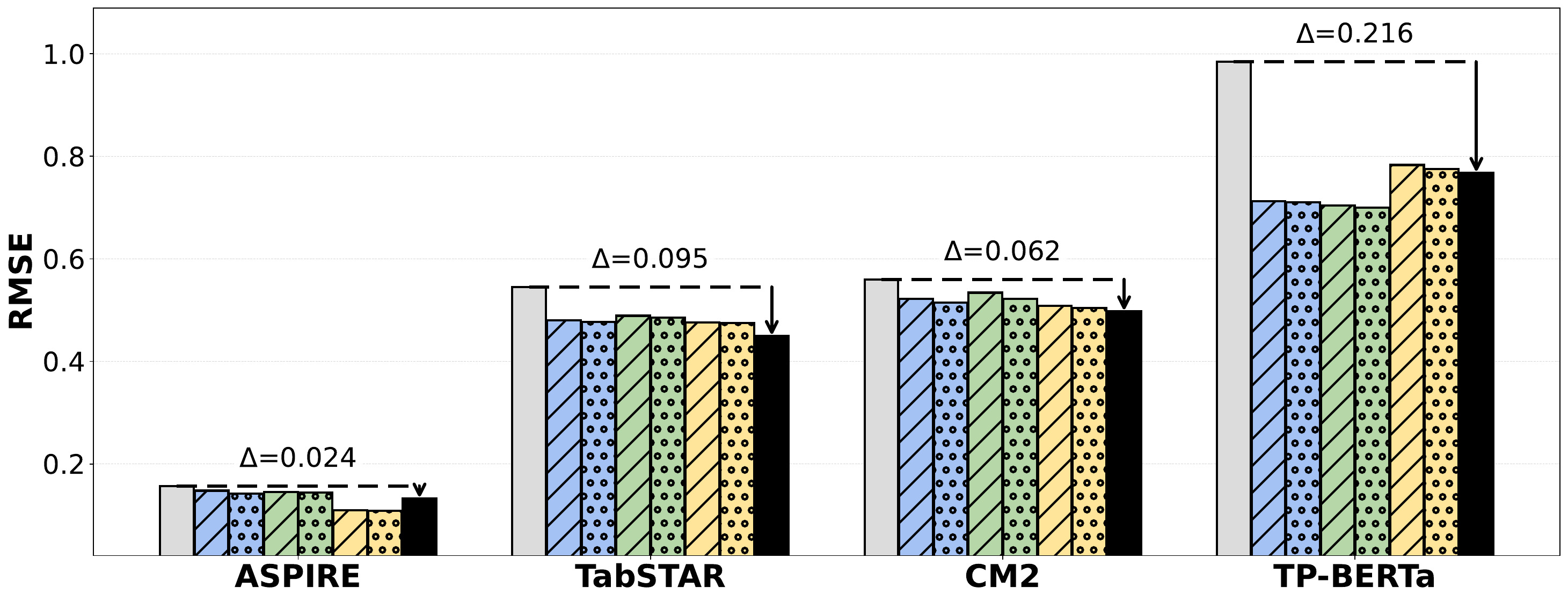}
  \caption{\textbf{Many-shot RMSE comparison} across data expansion strategies over various OCM backbones.}
  \label{fig:reg_finetune}
\end{figure}

\section{Additional details}
\label{sec:llm}

In this section we detail the configuration and prompts to LLMs used in different synthetic data generators like in our \textsc{Our task expansion}, \textsc{CLLM}, \textsc{StructSynth}, \textsc{TabPFN-based}, and \textsc{vanilla LLM}. 
\subsection{\textsc{Our Task Expansion}}
\paragraph{Models.}
We used the following instruction-tuned LLM backbones for synthetic dataset specification generation:
Llama-3.3-70B-Instruct, Meta-Llama-3.1-70B-Instruct, Meta-Llama-3.1-8B-Instruct,
Mistral-7B-Instruct-v0.3, QwQ-32B (and Preview), Qwen3-32B,
Qwen3-30B-A3B-Thinking-2507, and gpt-oss-20b.

Each model independently generated a batch of synthetic datasets.
The final corpus is the union of all datasets produced by these models.
No model ensembling or cross-model mixing was performed. All models were used with the same decoding configuration.

\paragraph{Decoding configuration.}
All models were accessed via HuggingFace \texttt{transformers} using chat-style prompting when supported.
Generation used nucleus sampling with temperature $0.7$, top-$p=0.95$, and a maximum of $2048$ new tokens (input truncated to $4096$ tokens).
We enforce JSON-only outputs and validate responses using a JSON parser; failed generations are retried up to three times.

\paragraph{Prompt design.}
The model is instructed to generate a fixed number of specific and realistic  domains for a given category, returning the result strictly as a JSON array. 

\begin{tcolorbox}[title=System Prompt]
\begin{verbatim}

You MUST respond with ONLY valid JSON. No explanations, no extra text, no 
markdown.
Start immediately with [ or {. End with ] or }. Nothing else.
\end{verbatim}
\end{tcolorbox}

\begin{tcolorbox}[title=Domain Generation Prompt]
\begin{verbatim}
Generate exactly {count} specific business domains/industries for the 
category: {category}

Requirements:
- Return ONLY a JSON array of domain names
- Use specific, descriptive domain names (e.g., "telemedicine" not just 
"healthcare")
- Avoid generic terms, be specific and modern
- Include both traditional and emerging sub-domains
- Make each domain unique and realistic

Format: ["domain1", "domain2", "domain3"]

Generate {count} domains now:
\end{verbatim}
\end{tcolorbox}

\begin{tcolorbox}[title=Topic Generation Prompt]
\ttfamily
Generate exactly \{count\} dataset topics for: \{domain\_1\}, \{domain\_2\}, \{domain\_3\}

Format: ["topic1", "topic2", "topic3"]
Examples: ["customer\_churn", "sales\_data", "inventory\_tracking"]

Generate \{count\} topics:
\end{tcolorbox}

\begin{tcolorbox}[title=Dataset Specification Prompt]
\begin{verbatim}
For topic '{topic}', create dataset specification.

Return this JSON format:
{
  "description": "Dataset description",
  "features": [
    {"name": "age", "type": "continuous", "description": "Age of the subject 
    in years", "range": [18, 80]},
    {"name": "category", "type": "discrete", "description": "Classification 
    category", "categories": ["A", "B", "C"]}
  ]
}

Generate 8-12 realistic features for '{topic}'. Each feature MUST have a 
description explaining what it represents.
\end{verbatim}
\end{tcolorbox}

\begin{tcolorbox}[title=BN Structure Prompt]
\begin{verbatim}
Create edges with for Bayesian Network with features in a topological order: 
{feature_names}

Return edge list as JSON:
[["parent1", "child1"], ["parent2", "child2"]]
\end{verbatim}
\end{tcolorbox}

\begin{tcolorbox}[title=Discrete CPT Prompt]
\begin{verbatim}
Generate probabilities for '{node}' categories: {categories}

Return JSON:
{"cat1": 0.0, "cat2": 0.0}

Make probabilities sum to 1.0:
\end{verbatim}
\end{tcolorbox}

\begin{tcolorbox}[title=Continuous CPT Prompt]
\begin{verbatim}
For feature '{node}' in range [{min}, {max}], parents: {parent_config}.
Return probabilities for 4 quartiles as JSON array [p1, p2, p3, p4].
Probabilities must sum to 1.
\end{verbatim}
\end{tcolorbox}

\subsection{\textsc{TabPFN-BASED}}

Following the procedure in $\S$\,\ref{sec:experiment} to generate data using TabPFN-based data expansion. We leverage the structural causal model (SCM) priors from TabPFN \citep{hollmann2023tabpfntransformersolvessmall} to generate synthetic tabular datasets. Specifically, we sample from two SCM families—\texttt{MLP-SCM} and \texttt{Tree-SCM}—each parameterized with a random number of features (10–50), samples (500–2,000), and seed, producing feature matrices and continous values. For classification tasks, we discretize the values using quantile binning to yield a mix type. Following is the prompt to add metadata and leverage a panel of LLMs-as-judge to filter out metadata. The final metadata is selected by max voting of the judge scores. We use  Meta-Llama-3.1-70B-Instruct, Meta-Llama-3.1-8B-Instruct, Qwen3-32B, Mixtral-8x7B-Instruct-v0.1 models as judges. 
\begin{tcolorbox}[title=Prompt to generate Metadata]
\begin{verbatim}

You are generating realistic metadata for a tabular dataset that should 
sound like it comes from a real-world domain.

IMPORTANT: Generate realistic, human-readable metadata that describes what 
each feature represents in real-world terms.

Requirements:
- Feature names should be MEANINGFUL and domain-appropriate 
(e.g., "patient_age", "temperature", "account_balance", "risk_category", 
"payment_status")
- Feature descriptions should be RICH and explain WHAT
the feature measures in real-world terms:
  * For example continuous: "Patient age at time of hospital admission", 
  "Monthly account balance in customer's primary checking account", 
  "Heart rate measurement during initial assessment"
  * For example categorical: 
  
  "Customer risk classification 
  based on credit history and payment behavior", 
  "Employment status indicating current work situation", 
  "Product category for the transaction"
  
- For categorical features, explain what the categories represent
and how they're used
- Dataset description should tell a compelling story:
what domain, what purpose, what decisions it supports, time period,
data source
- Do NOT include units - just describe what is being measured
- Do NOT mention standardization, normalization, or z-scores in descriptions
- focus on the real-world meaning
- Make descriptions informative and contextual 
(2-3 sentences for dataset, 1-2 sentences per feature)

Return STRICT JSON matching this schema (no markdown, no extra text):
{schema}

EVIDENCE: TABLE
{table}

EVIDENCE: TOP FEATURES FOR TARGET (mutual information)
{top_features_for_target}

EVIDENCE: FEATURE SUMMARIES (truncated)
{features}

EVIDENCE: NEIGHBOR RELATIONSHIPS (correlation-based)
{neighbors}

\end{verbatim}
\end{tcolorbox}

\begin{tcolorbox}[
  title=Prompt to LLM-as-judge,
]
\ttfamily\footnotesize
\begin{verbatim}
You are an impartial judge evaluating metadata quality.

Return STRICT JSON matching this schema (no markdown, no extra text):
{
  "candidate_id": "string",
  "scores": {
    "evidence_consistency": "0-10",
    "specificity": "0-10",
    "helpfulness": "0-10",
    "conciseness": "0-10"
  },
  "total": "0-10",
  "flags": ["string"],
  "fix_suggestions": ["string"]
}

CANDIDATE METADATA (to evaluate)
{candidate_metadata}
Scoring guidelines:
- evidence_consistency: penalize any claim that 
conflicts with numeric types/ranges/stats.
- specificity: reward descriptions that use evidence 
(ranges, distributions, relationships) without overfitting.
- helpfulness: reward metadata that would help a model/user interpret features.
- conciseness: reward clarity without unnecessary verbosity.
\end{verbatim}
\end{tcolorbox}

%SEMANTIC SUGGESTIONS (weak hints, optional)
%{semantic}
% Example GOOD feature descriptions:
% Continuous:
% - "Patient age at time of hospital admission"
% - "Monthly account balance in customer's primary checking account"

% Categorical:
% - "Customer risk classification based on credit history, payment behavior, 
% and debt-to-income ratio"
% - "Employment status indicating current work situation, used to assess income stability for loan approval decisions"

% Dataset description:
% - "Healthcare patient monitoring dataset collected from a regional hospital network between 2020-2023, containing clinical measurements and vital signs used to predict patient readmission risk within 30 days of discharge"

% - "Financial credit risk assessment dataset from a major lending institution, used to evaluate loan default probability based on applicant demographics, credit history, and financial behavior patterns"
\subsection{\textsc{CLLM}}
As prescribed by the authors, we use Mixtral-8x7B-Instruct-v0.1 as the LLM here to prompt and generate synthetic data with temperature $T{=}0.9$. For the curation step we using XGBoost model with $\textit{n\_estimators}{=}100$ trained on a sample of OpenTabs dataset and used to characterise each synthetic example by its confidence and aleatoric uncertainty.
\begin{tcolorbox}[title=Prompt]
\begin{verbatim}
You are a synthetic data generator. 
Your goal is to produce as diverse samples as possible.

Leverage your knowledge of \textit{[dataset description]}
to generate 1000 realistic but diverse samples.

\texttt{\{format\_instructions\}}
The output should be a markdown code snippet 
formatted in the following schema.
```json
{
    "feature\_1": string 
    "feature\_2": string
    ...
    "target": string 
}
\end{verbatim}
\end{tcolorbox}

\subsection{\textsc{StructSynth}}

We retain the original two-stage StructSynth algorithm unchanged.
To enable zero-shot operation, we introduce an upstream zero-shot seed generation phase that constructs synthetic dataset specifications, seed rows, and summary statistics, which are then supplied to StructSynth in zero-shot mode.

\subsubsection{\textsc{0-Shot Seed Generation Overhead}}

The upstream zero-shot generation phase follows the same task-expansion paradigm introduced earlier, extending it from specification generation to full dataset and summary-statistics construction.

\paragraph{Domain and Topic Universe.}
We first generate a universe of domains and dataset topics via LLM prompting.
Each dataset is defined by:
(i) domain,
(ii) topic,
(iii) a structured dataset specification containing 6--10 features with mixed types.

\paragraph{Seed Dataset Construction.}
For each specification, the LLM generates $n=20$ seed rows subject to strict schema constraints.
All values are validated and normalized.

\paragraph{Summary Statistics Generation.}
We additionally prompt the LLM to produce realistic population-level summary statistics, including:
\begin{itemize}
    \item categorical probability distributions,
    \item continuous mean/std/range parameters,
    \item pairwise association scores in $[0,1]$.
\end{itemize}

If summary statistics are incomplete or invalid, deterministic fallbacks are applied using empirical estimates from the seed data.

\subsubsection{\textsc{StructSynth in Zero-shot Mode.}}
The generated summary statistics are passed to StructSynth in zero-shot mode.
In particular, statistical association scores used during dependency discovery are derived from the provided summary statistics rather than computed directly from data.
In this configuration:
\begin{itemize}
    \item schema inference uses provided summary statistics,
    \item association scores are replaced by summary-based values,
    \item no real dataset beyond the synthetic seed is required.
\end{itemize}

StructSynth then generates $n_{\text{synth}}=200$ structured samples per dataset using the original two-stage algorithm.

\paragraph{Configuration.}
All stages use \texttt{gpt-4o-mini} as the backbone model.
Seed generation uses temperature $0.7$.
StructSynth uses temperature $0.9$.
Few-shot size $k=20$.
Row-batch size for structured generation is 1 unless otherwise specified.
All outputs are JSON-validated with up to three retries per call.

\subsection{\textsc{Vanilla LLM}}

\paragraph{Model Configuration.}
We use \texttt{Qwen/Qwen3-4B-Instruct-2507} served via vLLM with
temperature $0.7$, top-$p=0.95$, maximum $512$ new tokens per call,
and maximum context length $1024$. Rows are generated in batches of $8$.
We generate datasets with $100$ samples per dataset under a fixed wall-clock budget of $48$ hours on a single 16GB GPU, ensuring computational parity with other methods. Under this constraint, the vanilla LLM baseline produced a total of $227$ distinct datasets.
The domain universe size is $150$ and topic universe size is $1000$.

\paragraph{Generation Procedure.}
The vanilla baseline follows the same domain, topic, and dataset specification
prompt templates described above for \textsc{Our Task Expansion}
and the 0-shot upstream stage of \textsc{StructSynth}.
However, no structural modeling (e.g., dependency graphs, CPTs, or summary-based
association scores) is applied. Synthetic rows are generated directly from the
LLM conditioned only on the dataset specification and feature constraints.

\end{document}